\documentclass[a4paper,USenglish,cleveref, autoref, thm-restate]{lipics-v2021}
\usepackage{algorithm}
\usepackage{algpseudocode}
\usepackage{bm}
\usepackage{mathabx}
\usepackage{multirow}

\newtheorem{prop}{Proposition}
\newtheorem{exmp}{Example}
\newtheorem{defn}{Definition}

\newcommand{\indep}{\perp \!\!\! \perp}
\newcommand{\an}{\text{An}}
\newcommand{\cl}{\text{Cl}}
\newcommand{\pa}{\text{Pa}}

\newcommand{\G}{\mathcal{G}}
\newcommand{\X}{\mathbf{X}}

\title{Analyzing Complex Systems with Cascades Using Continuous-Time Bayesian
  Networks}

\author{Alessandro Bregoli}
  {Department of Informatics, Systems and Communication, University of
   Milano-Bicocca, Italy}{a.bregoli1@campus.unimib.it}
  {https://orcid.org/0000-0002-1743-4441}{}
\author{Karin Rathsman}
  {European Spallation Source ERIC, Lund, Sweden \and
   \url{https://europeanspallationsource.se} }{karin.rathsman@ess.eu}
  {https://orcid.org/0009-0005-0715-8905}{}
\author{Marco Scutari}
  {Istituto Dalle Molle di Studi sull'Intelligenza Artificiale (IDSIA),
   Lugano, Switzerland \and \url{https://www.bnlearn.com}}
  {scutari@bnlearn.com}{https://orcid.org/0000-0002-2151-7266}{}
\author{Fabio Stella}
  {Department of Informatics, Systems and Communication, University of
   Milano-Bicocca, Italy \and \url{https://www.unimib.it/fabio-antonio-stella}}
  {fabio.stella@unimib.it}{https://orcid.org/0000-0002-1394-0507}{}
\author{S\o{}ren Wengel Mogensen}
  {Department of Automatic Control, Lund University, Sweden \and
   \url{https://soerenwengel.github.io/} }{soren.wengel_mogensen@control.lth.se}
  {https://orcid.org/0000-0002-6652-155X}
  {The work of SWM was funded by a DFF-International Postdoctoral Grant
   (0164-00023B) from Independent Research Fund Denmark. SWM is a member of the
   ELLIIT Strategic Research Area at Lund University.}

\authorrunning{A. Bregoli, K. Rathsman, M. Scutari, F. Stella and S. W. Mogensen}

\Copyright{Alessandro Bregoli, Karin Rathsman, Marco Scutari, Fabio Stella, and
  S{\o}ren Wengel Mogensen}

\ccsdesc[500]{Mathematics of computing~Markov processes}
\ccsdesc[500]{Mathematics of computing~Bayesian networks}

\keywords{event model, continuous-time Bayesian network, alarm network,
  graphical models, event cascade}

\EventEditors{Alexander Artikis, Florian Bruse, and Luke Hunsberger}
\EventNoEds{3}
\EventLongTitle{30th International Symposium on Temporal Representation and Reasoning (TIME 2023)}
\EventShortTitle{TIME 2023}
\EventAcronym{TIME}
\EventYear{2023}
\EventDate{September 25--26, 2023}
\EventLocation{NCSR Demokritos, Athens, Greece}
\EventLogo{}
\SeriesVolume{278}
\ArticleNo{8}

\begin{document}

\maketitle

\begin{abstract}
  Interacting systems of events may exhibit \emph{cascading} behavior where
  events tend to be temporally clustered. While the cascades themselves may be
  obvious from the data, it is important to understand which states of the
  system trigger them. For this purpose, we propose a modeling framework based
  on \emph{continuous-time Bayesian networks} (CTBNs) to analyze cascading
  behavior in complex systems. This framework allows us to describe how events
  propagate through the system and to identify likely \emph{sentry states}, that
  is, system states that may lead to imminent cascading behavior. Moreover,
  CTBNs have a simple graphical representation and provide interpretable
  outputs, both of which are important when communicating with domain experts.
  We also develop new methods for knowledge extraction from CTBNs and we apply
  the proposed methodology to a data set of alarms in a large industrial system.
\end{abstract}

\section{Introduction}

\begin{figure}[b!]
  \centering
  \begin{subfigure}[c]{0.26\textwidth}
    \centering
    \includegraphics[width=\textwidth]{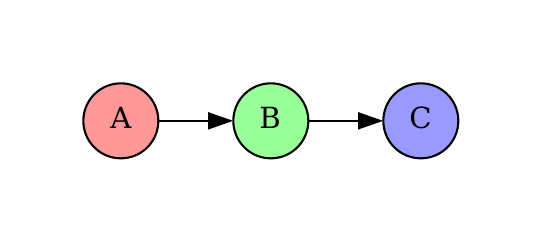}
    \caption{CTBN graph $\G$}
    \label{fig:chain3:Network}
  \end{subfigure}
  \hfill
  \begin{subfigure}[c]{0.45\textwidth}
    \centering
    \includegraphics[width=\textwidth]{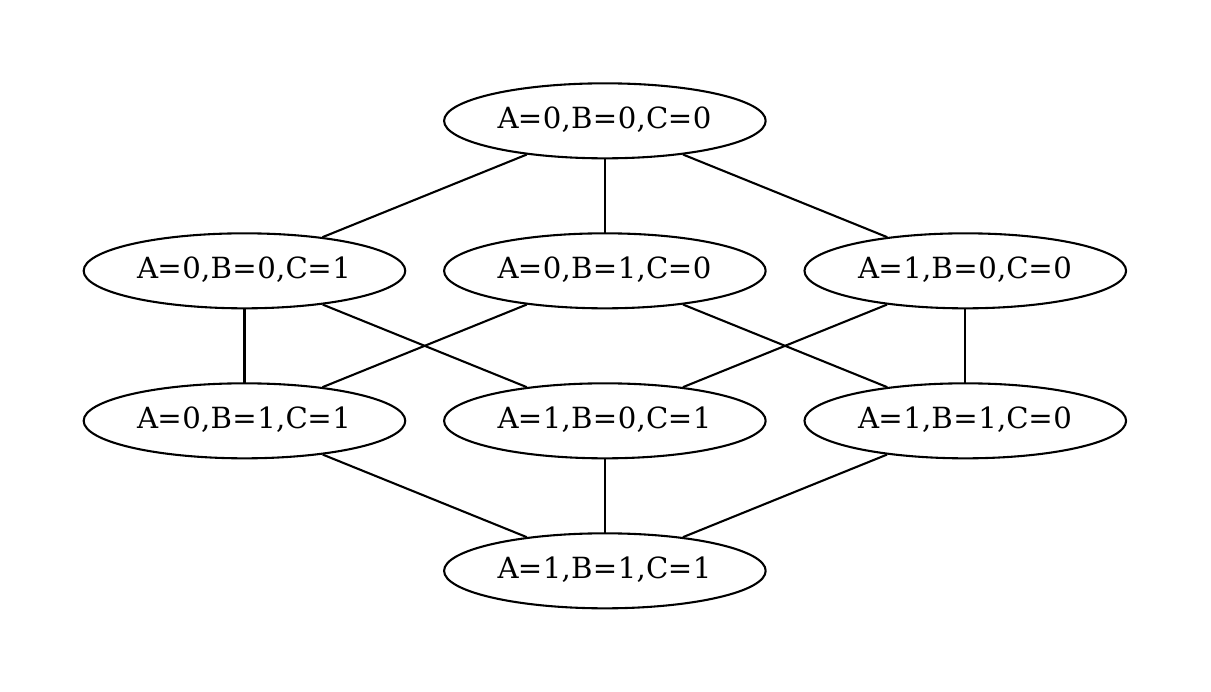}
    \caption{State space graph $\G_s$}
    \label{fig:chain3:state_space}
  \end{subfigure}
  \hfill
  \begin{subfigure}[c]{0.26\textwidth}
    \centering
    \includegraphics[width=\textwidth]{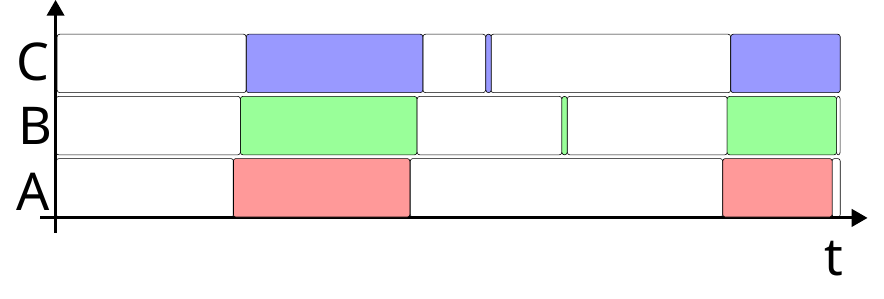}
    \caption{Trajectory}
    \label{fig:chain3:trajectory}
  \end{subfigure}
  \caption{ \emph{a}) Graph $\G$ of a CTBN consisting of three nodes ($A$, $B$,
    $C$). \emph{b}) State space graph $\G_s$ of the CTBN, i.e., nodes represent
    states of the CTBN and two states are adjacent if a transition from one to
    the other, and vice versa, is possible. \emph{c}) A trajectory for the CTBN.
    A white segment indicates that the corresponding process is in state 0
    (off), while a colored segment indicates that the corresponding process is
    in state 1 (on).}
  \label{fig:chain3}
\end{figure}

Many real-world phenomena can be modeled as interacting sequences of events of
different types. This includes social networks where user activity influences
the activity of other users \cite{cencetti2021temporal}. In healthcare, patient
history may be modeled as a sequence of events \cite{weiss2013forest}. In this
paper, we focus on an industrial application in which the events are alarm
signals of a complex engineered system. As an illustration, consider Figure
\ref{fig:chain3}. Three different alarms ($A$, $B$, and $C$) monitor a process
each within an industrial system. These processes may, for instance, represent
measured temperatures or pressures. An alarm transitions to \emph{on} when its
the process it monitors leaves a prespecified range of values and transitions to
\emph{off} when the process is again within the range corresponding to `normal
operation'. In Figure \ref{fig:chain3:trajectory}, the colored segments
correspond to the alarm being \emph{on}. When an alarm changes state, we say
that an \emph{event} occurs.

In particular, we are interested in systems that exhibit a \emph{cascading}
behavior in the sense that events are strongly clustered in time. Therefore, we
should use a model class which is capable of expressing cascades. However,
important information may also be contained in non-cascading parts of the
observed data. This is explained by the fact that our goal is twofold: We wish
to understand which states of the system lead to cascades, and we also wish to
understand how different components of the system interact. These goals are
connected as understanding the inner workings of the system will also help us
understand the cascading behavior. We can achieve both by using
\emph{continuous-time Bayesian networks} (CTBNs) \cite{nodelman2007continuous},
a class of parsimonious Markov processes with a discrete state space. Moreover,
CTBNs are equipped with a graphical representation of how different components
interact. In our application, data is sampled at a very high frequency and using
a continuous-time modeling framework makes for a conceptually and
computationally simple approach. In CTBNs, we define the concept of a
\emph{sentry state} which is a state that may lead to an imminent cascade. In an
industrial setting, identifying such states may give operators an early warning,
which in turn facilitates the mitigation of underlying issues before an actual
alarm cascade occurs. Moreover, sentry states can be used to apply state-based
alarm techniques \cite{hollifield2011alarm}, which otherwise require a known
alarm structure.

In applications to complex systems, analysts often need to communicate findings
to domain experts. CTBNs also allow easy communication using their graphical
representation which is crucial for their applicability to real-world problems.
We describe a useful interpretation of the graph of a CTBN and provide some new
results in this direction. We apply the methods developed in this paper to a
challenging data set from the European Spallation Source (ESS), a research
facility in Lund, Sweden. This data set describes how alarms propagate through a
subsystem of the neutron source at ESS.

The paper is structured as follows. We start with a description of the ESS data
as this motivates the following developments (Section \ref{appendix:Dataset}).
In Section \ref{sec:related}, we describe related work and compare CTBNs with
other approaches. Section \ref{sec:models} describes continuous-time Bayesian
networks. Sections \ref{sec:sentry} and \ref{sec:graphicalinfo} contain the main
theoretical contributions of the paper: Section \ref{sec:sentry} describes the
concept of a sentry state while Section \ref{sec:graphicalinfo} explains how the
graphical representation of a CTBN may assist interpretation and communication.
Section \ref{sec:num} presents numerical experiments, including a description of
the ESS data analysis. A discussion concludes the main paper and the appendices
contain auxiliary results.

\section{Data Set}
\label{appendix:Dataset}

The European Spallation Source ERIC (ESS) is a large research facility which is
being built in Lund, Sweden. Its main components include a linear proton
accelerator, a tungsten target, and a collection of neutron instruments
\cite{essabout}. It comprises a large number of systems, including an integrated
control system \cite{EPICS}. The facility has a goal of 95\% availability and
state-of-the-art alarm handling may contribute to reaching this goal.

Operators of large facilities are often facing large quantities of data in real
time and good tools may helpsystem understanding and support decision making.
Operators rely on alarm systems to warn them about unexpected behavior. However,
alarm problems are common \cite{hollifield2011alarm}. One example is that of
\emph{cascading alarms}. In large facilities, different alarms monitor different
processes and when an issue occurs this may result in a large number of alarms
that occur within a short time frame due to the interconnectedness of the
different processes. Operators will often find it difficult to respond to such
cascades as hundreds or thousands of alarms may sound, making it difficult to
identify the underlying issue.

The alarm system has two purposes. One, it should help operators foresee and
mitigate fault situations. Two, it should help operators understand a fault
situation. In this paper, we illustrate how the methods we propose can help
achieve these goals using data from the accelerator cryogenics plant at ESS,
which has been in operation since 2019.

\begin{exmp}[Simplified ESS alarm network]
  In this example, we will look at a simplified version of the alarm system at
  ESS. Each of the alarm processes P1, T1, P2, T2, P3, T3, S1, S2, S3, and S4 in
  Figure  \ref{fig:systemgraph} (left) monitors a physical process, e.g., a
  temperature or a pressure. At each time point, each alarm process is either 1
  (alarm is \emph{on}) or 0 (alarm is \emph{off}). An edge, $\rightarrow$, in
  the graph implies a dependence in transition rates, e.g., the rate with which
  P1 changes its state depends only on the current states of P1, T1, and S4.
  Alarm cascades occur when a certain state triggers a fast progression of alarm
  onsets. In this formalism, this is modeled by the dependence of transition
  rates on the current state: therefore, cascades tend to unfold along the
  directed edges of the graph.

  Large engineered systems can often be divided into subsystems (in Figure
  \ref{fig:systemgraph}, Systems 1, 2, 3, and 4). In alarm networks, a group of
  alarms may monitor processes that are known to be correlated as they are
  measured from the same subsystem. Moreover, many systems of a realistic size
  comprise so many alarms or variables that processes must be grouped into
  subsystems to enable a system-level understanding of their dependencies. In
  Section \ref{sec:graphicalinfo}, we show that graphical representations using
  these subsystems (e.g., Figure \ref{fig:systemgraph} (right)), instead of the
  processes themselves, are also useful and have a clear interpretation.

  \begin{figure}[ht!]
    \begin{subfigure}[c]{.7\textwidth}
       \includegraphics[width=\textwidth]{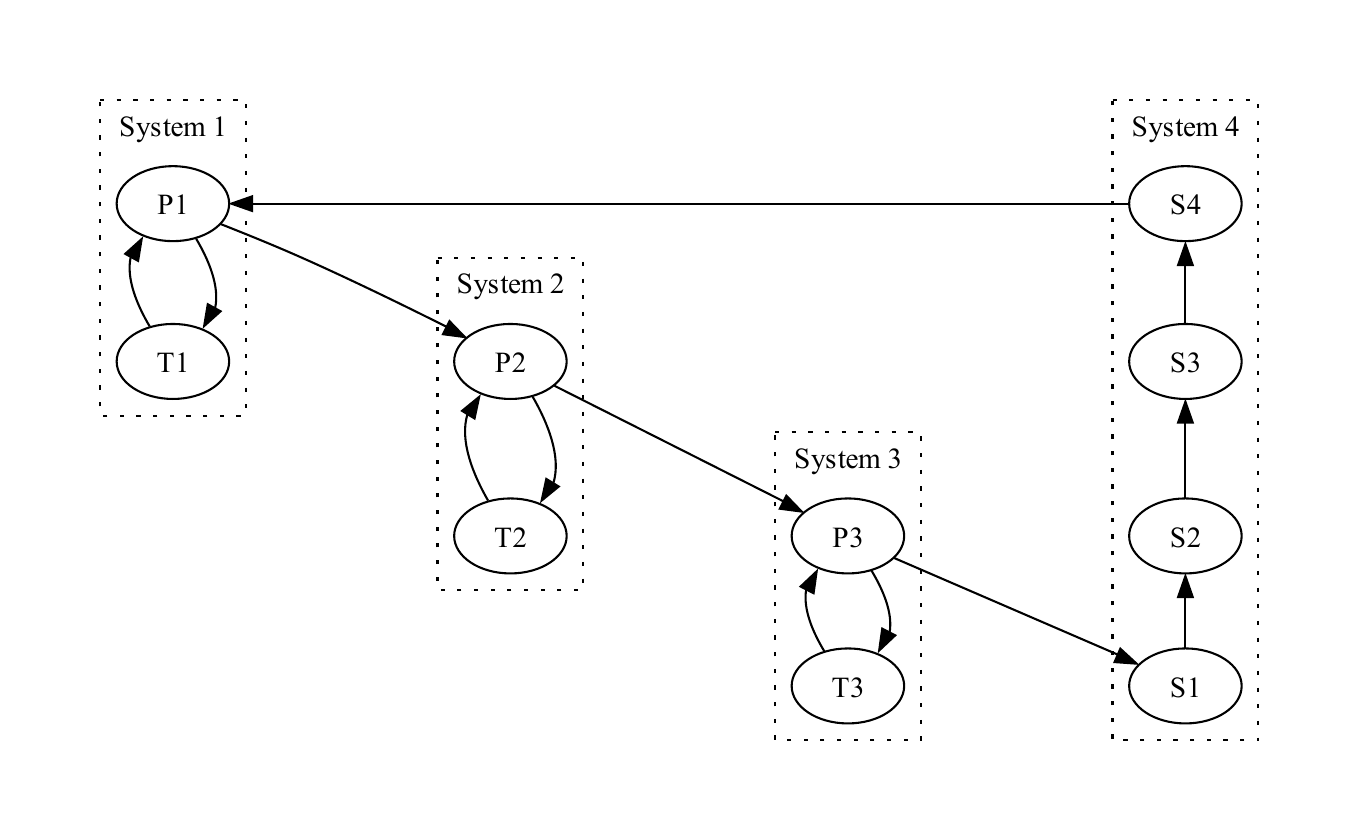}
    \end{subfigure}
    \hfill\vrule\hfill
    \begin{subfigure}[c]{.25\textwidth}
      \includegraphics[width=\textwidth]{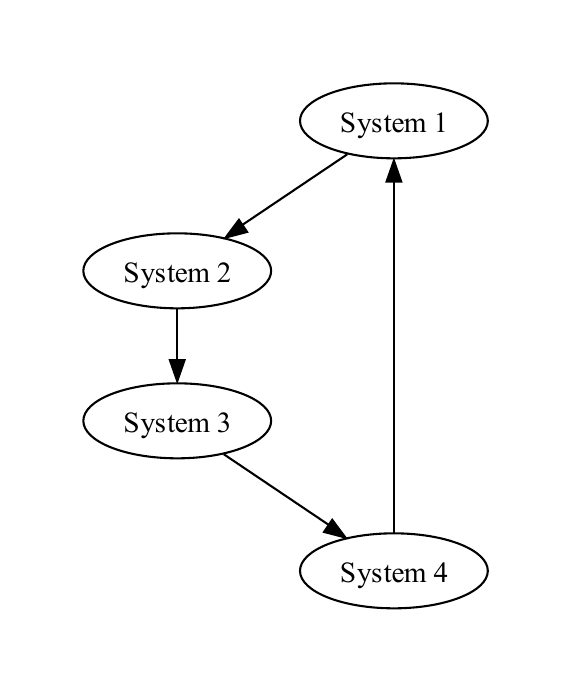}
    \end{subfigure}
    \caption{Graphs from Example \ref{exmp:system1}. Left: Graph such that each
      node represents an alarm process. Edges, $\rightarrow$, represent sparsity
      in transition rate dependence. The transition rate of each process, $A$,
      only depends on its own state and the states of its \emph{parent}
      processes, i.e., processes $B$ such that the edge $B\rightarrow A$ is in
      the graph. Right: Graph representing transition rate dependencies between
      entire subsystems rather than between individual alarm processes. This
      graph is computed from the larger graph on the left and can be given a
      mathematical interpretation (Section \ref{sec:graphicalinfo}).}
    \label{fig:systemgraph}
  \end{figure}

\label{exmp:system1}
\end{exmp}

\section{Related Work}
\label{sec:related}

Our work has connections to several major directions in the literature of
dynamical models. We will focus on methods that are relevant for modeling
cascades of events.

Continuous-time Markov processes with a discrete state space are often used as
models of cascading failures in complex networks such as power grids
\cite{rahnamay2014stochastic, nakarmi2020markov} and as models of burst behavior
in biological cells \cite{ball2002multivariate}. They are also used in chemistry
\cite{anderson2011continuous}, reliability modeling \cite{iyer2009markovian},
queueing theory \cite{medhi2002stochastic}, and genetics
\cite{bergman2018inference}. Graphs may be used as representations of networks
and several methods use graphs to represent cascading structures
\cite{netrapalli2012learning, nakarmi2020interaction}. Some of these methods use
simple models of contagion and study influential nodes in a graph
\cite{lappas2010finding}. Among these we find the \emph{independent cascade
model} and the \emph{linear threshold model} \cite{kempe2003maximizing,
chakrabarti2008epidemic}. These models are targeted at applications in which
cascades or `epidemics' are the salient feature.

\emph{Change-point detection} methods aim to recover the time points at which
distributional changes occur. There is a large literature on change-point
detection in various classes of stochastic processes and application fields such
as neuroscience  \cite{ritov2002detection}, DNA sequence segmentation
\cite{braun1998statistical}, speech recognition \cite{rybach2009audio}, and
climate change \cite{reeves2007review}. \cite{aminikhanghahi2017survey} provides
a survey on change-point methods in discrete-time models.
\cite{truong2020selective} also provides a survey and discusses subsampling of
continuous-time processes. \cite{wang2022sequential} provides a method for
change-point detection in a class of multivariate point processes. There are
subtle, but important, differences between the task at hand and change-point
detection. The alarm network that motivates our work is thought to operate in
the same mode throughout the observation period. Moreover, the detection of
cascades is trivial in this application as they are evident from simply
visualizing the data. Instead, we focus on identifying the states that are
likely to lead to a cascade. In addition, our modeling approach should allow for
a qualitative understanding of the interactions between system components.

As a result, we would like to explicitly model how events propagate through the
system. A traditional approach is to use \emph{point processes}
\cite{daley2003introduction}. In our setting, we can think of a point process as
a sequence of pairs $(t_1,e_1), (t_2,e_2), (t_3,e_3),\ldots$ such that $(t_i)$
is an increasing sequence of time points and $(e_i)$ denotes the type of event.
Point process models have been used for modeling cascades of failures
\cite{lee2016point}. \cite{rambaldi2018detection} describes self-exciting point
processes that model cascading behavior using explicit exogenous influences on
the system. These methods can in principle also be applied in the setting of
this paper. The CTBN-based method proposed in this paper models the state of the
system directly and this facilitates the notion of \emph{sentry states} which is
central to this paper. Moreover, as the alarm status takes values in a discrete
set, the CTBN provides a natural representation of the alarm data.

While the above describes general approaches to the modeling of dynamical
systems with cascading behavior, there are also more specialized methods for
handling or analyzing alarm cascades in large industrial systems. We now
summarize the connections to this work; see also \cite{alinezhad2022review},
which is a recent review of methods for alarm cascade problems (in that paper
known as alarm \emph{floods}). In short, so-called \emph{knowledge-based}
methods use expert knowledge of the cause-effect relations in the system to find
root causes and explain alarm cascades. Among these are \emph{multilevel flow
modeling} \cite{lind2013overview} and \emph{signed directed graphs}
\cite{wan2013statistical}. In contrast, there is also a large number of
\emph{data-based} methods. Data-based methods can be subdivided into classes of
methods depending on their purpose. Some approaches aim to classify the fault
type from an input sequence of alarms or to simply reduce the number of alarms,
e.g., using data mining, clustering, or machine learning methods such as
\emph{artificial neural networks} \cite{yang2012improved, rodrigo2016causal,
arunthavanathan2022autonomous}. This goal is somewhat different from ours and
our method is more closely related to methods using \emph{probabilistic
graphical models}, in particular dynamic Bayesian networks \cite{hu2015fault}.
These methods produce an easily interpretable output represented by a graph. Our
contribution in relation to this prior work is twofold. 1) We introduce the CTBN
framework as a natural further development of alarm modeling. This allows
continuous-time modeling which is useful for our data as it is collected using a
very high sampling rate. Discretization of time will therefore either lead to a
prohibitively large number of time intervals or to a loss of temporal
information if using fewer, longer time intervals. 2) We define the novel
concept of a \emph{sentry state} which identifies a set of states that are of
interest when analyzing cascading behavior. The sentry state concept may also be
used in other alarm propagation models and is not specific to CTBNs.

\section{Models}
\label{sec:models}

Continuous-time Bayesian networks (CTBNs; \cite{nodelman2002continuous}) are a
class of continuous-time Markov processes (CTMPs) with a factored state space
and a certain sparsity in how transition rates depend on the current state. This
sparsity can be represented by a directed graph. In this sense, they are similar
to classical Bayesian networks \cite{pearl1988probabilistic} but their directed
graphs are allowed to contain cycles. CTBNs have proved to be both effective and
efficient representations of discrete-state continuous-time dynamical systems
\cite{AVPMZS16,ijcai2018p804,LHLS2019}. We first define a CTMP.

\subsection{Continuous-Time Markov Processes}

A \emph{continuous-time Markov process} (CTMP) \cite{shelton2014tutorial} is a
continuous-time stochastic process $\X = \{X(t) : t\in \left[0, \infty\right)\}$
which satisfies the following Markov property:
\begin{equation}
  X(t_1) \indep X(t_3) | X(t_2), \hspace{0.2cm} \forall \hspace{0.1cm} t_1 < t_2 < t_3,
\end{equation}
where $\cdot \indep \cdot \mid \cdot$ denotes conditional independence. The
state of the process $\X$ changes in continuous-time and takes values in the
domain $S$ which we assume to be a finite set. In our application, each state
$s\in S$ can be represented by an $n$-vector with binary entries indicating
whether each alarm is \emph{on} or \emph{off}. A CTMP can be parameterized by
the \emph{initial distribution} $P_0$ and the \emph{intensity matrix} $Q_\X$.
The initial distribution $P_0$ is any distribution on the state space. The
intensity matrix $Q_\X$ models the evolution of the stochastic process $\X$.
Each row of $Q_\X$ sums to $0$ and models two distinct processes:
\begin{enumerate}
  \item The time when $\X$ abandons the current state $x$, which follows an
    \emph{exponential distribution} with parameter $q_x \in \mathbb{R}^+$.
  \item The state to which $\X$ transitions when abandoning the state $x$. This
    follows a \emph{multinomial distribution} with parameters $\theta_{xy} =
    \frac{q_{xy}}{q_x}$, $x,y \in S$, $x\neq y$.
\end{enumerate}
An instance of the intensity matrix $Q_X$, when $S$ has three states, is as follows
\begin{equation}
  Q_\X = \begin{bmatrix}
    -q_{x_1} & q_{x_1 x_2} & q_{x_1 x_3} \\
    q_{x_2 x_1} & -q_{x_2} & q_{x_2 x_3} \\
    q_{x_3 x_1} & q_{x_3 x_2} & -q_{x_3} \\
  \end{bmatrix} \;\;\;\; q_{x_i} > 0, q_{x_{i}{x_j}} \geq 0 \; \forall \, i,j.
  \label{eq:QMatrix}
\end{equation}

We say that a realization of a CTMP, $\sigma$, is a \emph{trajectory}. This is a
right-continuous, piecewise constant function of time. It can be represented as
a sequence of time-indexed events,
\begin{align}
  &\sigma = \{\langle t_0, X(t_0)\rangle,
             \langle t_1, X(t_1)\rangle, ...,
             \langle t_I, X(t_I)\rangle\},& &t_0 < t_1 < \dots <t_I.
\label{eq:trajectory}
\end{align}

\subsection{Continuous-Time Bayesian Networks}

General CTMPs do not assume any sparsity. CTBNs impose structure on a CTMP by
assuming a factored state space $S = \{S_1 \times S_2 \times \dots \times
S_L\}\;$ such that $ X(t) = (X_1(t), \ldots, X_L(t))\in S$  where each
$S_j$,~$j=1, ..., L$, represents the domain of a distinct component of the
process.\footnote{A CTBN is specialization of a CTMP. It is possible to
reformulate a CTBN as a CTMP by applying the so-called amalgamation procedure
\cite{shelton2014tutorial}.} In the alarm data, $S_j = \{0,1\}$ for each $j$ and
$X_j(t)$ indicates if the $j$'th alarm is \emph{on} or \emph{off} at time $t$.
The structure imposed by the CTBN is useful when interpreting a learned model.
In essence, the CTBN framework allows us to learn which components of the system
act independently, or conditionally independently, and this can be communicated
to experts.

A CTBN is a tuple ${\cal N}=\langle P_0,  \X, {\cal G}, {\bf Q}_X\rangle$ where
$\X = \{\X_1,\ldots,\X_L\}$ is a set of stochastic processes. A CTBN is
specified by:
\begin{itemize}
  \item An initial probability distribution $P_0$ on the factored state space
    $S$.
  \item A continuous-time transition model, specified as:
  \begin{itemize}
    \item a directed (possibly cyclic) graph $\G$ with node set $\textbf{X}$;
    \item a set of conditional intensity matrices $\mathbf{Q}_{\X_j|\pa(\X_j)}$
      for each process $\X_j\in \X$.
  \end{itemize}
\end{itemize}

Given the graph $\G$, each node/process $\X_j$ has a \emph{parent set}
$\pa(\X_j)$ consisting of all nodes/processes, $\X_i$, with an edge directed
from $\X_i$ to  $\X_j$ in $\G$, $\X_i \rightarrow \X_j$. A \textit{conditional
intensity matrix} (CIM) $\mathbf{Q}_{\X_j|\pa(\X_j)}$ consists of a set of
intensity matrices, one for each possible configuration of the states of the
parent set $\pa(\X_j)$ of the node/process $\X_j$, that is, one for each element
of $\bigtimes_{\X_i\in \pa(\mathbf{X_j})} S_i$. The CIM describes how the
transition intensity of process $j$ depends on the state of the system. However,
it does not necessarily depend on the state of every other process, but only on
the states of the processes that are parents of $j$.

In a CTBN only one process can transition at any given time. This assumption is
reasonable for the alarm data as it is sampled at a high rate. When we apply
this model to the alarm data, the interpretation is straightforward. The CIM of
$\X_j$ describes how likely this alarm is to change its state (from
\emph{on} to \emph{off}, or from \emph{off} to \emph{on}), and this only depends
on the current state of the alarms in its parent set.

\begin{exmp}
  Assume we observe three alarm processes ($A$, $B$, and $C$) each monitoring a
  measured process and that we represent this by a stochastic process $\X$ that
  takes values in $\{0,1\} \times \{0,1\} \times \{0,1\}$ indicating the status
  of each of the three alarms. If $\X$ is a general CTMP, then the transition
  rates of each alarm process may depend on the entire state of the system. On
  the other hand, if $\X$ is a CTBN represented by the graph $\G$ in Figure
  \ref{fig:chain3:Network} there is a certain sparsity in the way the transition
  rates depend on the current state. It follows directly that the transition
  rate of alarm process $C$ only depends on the states of processes $B$ and $C$.
  Similarly, the transition rate of alarm process $B$ only depends on the states
  of processes $A$ and $B$ while the transition rate of process $A$ only depends
  on its own state. When we learn a CTBN from data, we can therefore use its
  graph as a qualitative summary of the interconnections of the alarms. An edge
  from $A$ to $B$ in this graph means that a change in the state of $A$ may
  change the transition rates of $B$ and therefore cascades are expected to
  occur along the directed edges of the graph.

  Another type of graphical representation is also useful: Figure
  \ref{fig:chain3:state_space} shows the associated factored state space graph
  $\G_s$. In this graph, each node represents a state (in contrast, in $\G$ each
  node represents a process/alarm) and edges represent the possible transitions
  between states. We will say that $\G$ is the graph of the CTBN and refer to
  $\G_s$ as the state space graph.
\end{exmp}

As we will see in the numerical experiments, CTBNs are capable of producing
`cascading' behavior. However, they also model the non-cascading behavior: This
is important for our application because we would also like to use the
information contained outside periods with cascades. The CTBN framework has the
following advantages that are critical to our application: 1) It exploits the
factorization of the multivariate alarm process  to make it feasible to learn a
CTBN model from data. 2) It takes into account the duration of an event as well
as its occurrence. Moreover, it uses both the cascading and non-cascading data.
3) It has a graphical representation which is easy to interpret, thus
facilitating communication.

\subsection{Reward Function}

A \emph{reward function} is a function that maps the states of one or more
processes onto a real number. We use a reward function to compute the
discounted, expected number of transitions (that is, alarms changing their
states) when starting the process in some initial state, $x$. A reward function
consists of two quantities,
\begin{itemize}
  \item $\mathcal{R}(x): S \rightarrow \mathbb{R}$, the \emph{instantaneous
    reward} of state $X=x$, and
  \item $\mathcal{C}(x, x'): S \times S\rightarrow \mathbb{R}$, the \emph{lump
    sum reward} when $X$ transitions from state $x$ to state $x'$.
\end{itemize}
We use the lump sum reward which is an indicator of transitions,
\begin{equation}
  \mathcal{C}(x, x') = 1, \ \ \ \text{ for all } x,x' \in S,
\label{eq:lump_sum_reward}
\end{equation}
and we let the instantaneous reward be zero. We will use the
\emph{infinite-horizon expected discounted reward} \cite{guo2009continuous},
\begin{equation}
  V_\alpha(x) = \mathbb{E}_x \left[ \sum_{i=0}^\infty
    e^{-\alpha t_{i+1}}C(X(t_i), X(t_{i+1})) + \int _{t_i}^{t_{i+1}}
    e^{-\alpha t}\mathcal{R}(X(t_i)) dt \right] : t_i < t_{i+1}
\label{eq:infinite_horizon}
\end{equation}
where $\alpha > 0$ is referred to as the \emph{discounting factor},
$\mathbb{E}_x(\cdot)$ is the expectation when conditioning on $ X(0) = x $, and
the $t_i$'s are the transition times.  We use $\mathcal{R}(x) = 0$ for all $x$
and therefore
\begin{equation}
  V_\alpha(x) = {\mathbb{E}_x \left[ \sum_{i=0}^\infty
    e^{-\alpha t_{i+1}} C(X(t_i), X(t_{i+1})) \right]} =
    \mathbb{E} \left[ \sum_{i=0}^\infty e^{-\alpha t_{i+1}} \right]:
    t_i < t_{i+1}.
\label{eq:infinite_horizon_2}
\end{equation}
This simply counts the number of transitions including a discounting factor.
Clearly, other reward functions can be chosen to analyze other or more
specialized types of behavior. If, for instance, we are only interested in a
subset of transitions we can modify the lump sum reward accordingly. A value of
the parameter $\alpha$ can be chosen using, e.g., prior information on the
length of typical cascades. This concludes the introduction of the modeling
framework and the following sections describe the contributions of this paper.

\section{Sentry State}
\label{sec:sentry}

Given a CTBN ${\cal N}=\langle P_0, \X, \G, \mathbf{Q}_X\rangle$, we
are interested in understanding its cascading behavior. We are in particular
interested in identifying what we will call \emph{sentry states}. A sentry state
is a state which may trigger a \emph{ripple effect}, that is, a sequence of fast
transitions.

\begin{exmp}
  An example of a ripple effect can be found in the CTBN in Figure
  \ref{fig:chain3}a which consists of three processes $\X_A$, $\X_B$ and $\X_C$,
  forming a \emph{chain} $A\rightarrow B \rightarrow C$, and each taking values
  in $\{0, 1\}$, $S_A=S_B=S_C=\{0, 1\}$. The trajectory in Figure
  \ref{fig:chain3:trajectory} shows that every time $A$ transitions, $B$ and $C$
  quickly transition as well. In other words, when $A$ changes its state, a
  ripple effect occurs such that $B$ and $C$ also change states to match the
  state of their parent. The starting point of these cascades of events is a
  \emph{sentry state} as defined in this section.

  \begin{figure}[ht!]
    \centering
    \includegraphics[width=\textwidth]{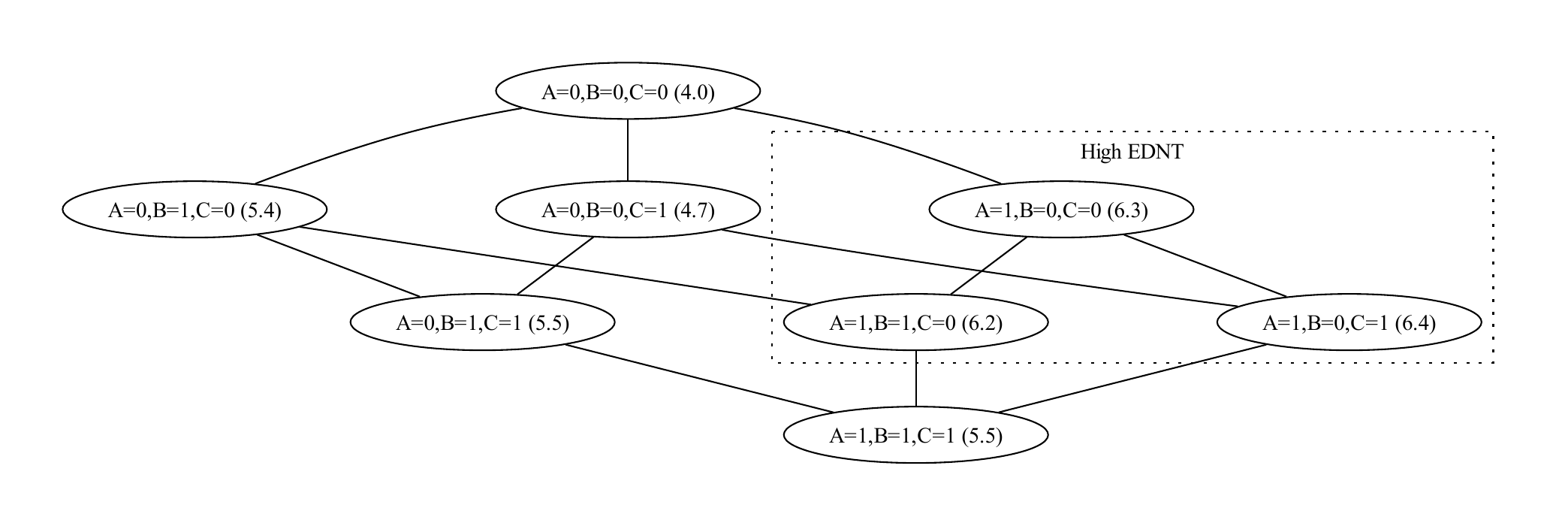}
    \caption{Visualization of Example \ref{exmp:chain3}: The state space graph
      (same graph as in Figure \ref{fig:chain3:state_space}, but presented
      slightly differently) is annotated with EDNTs for each node (in
      parentheses) and a subset of nodes with relatively high EDNT values is
      highlighted. Sentry states are high EDNT states to which transitions from
      low EDNT states are possible and such a transition increases the risk of
      an imminent cascade. Some states with high EDNT tend to occur in the
      middle of a cascade which motivates using the REDNT metric. In this
      example, transitioning from $(A=0,B=0,C=0)$ to $(A=1,B=0,C=0)$ may trigger
      an alarm in $B$ which in turn may trigger an alarm in $C$, resulting in a
      cascade (note that only transitions along the edges in the state space
      graph are possible).}
  \label{fig:chain3withEDNT}
  \end{figure}

\label{exmp:chain3}
\end{exmp}

It is important to stress that we are interested in states that \emph{start} a
cascade of events. Intuitively, this means that we are assuming the existence of
at least one state in the state space graph which is directly connected to the
sentry state and which has a much smaller expected number of transitions than
the sentry state itself. We can observe this in the example in Figure
\ref{fig:chain3}c: Before starting the sequence of transitions of processes
$\X_A$, $\X_B$, and $\X_C$ from $0$ to $1$, the CTBN remained for a long time in
state $\{A=0, B=0, C=0\}\in S$. Similarly, before changing the state of all
processes $A$, $B$ and $C$ from $1$ to $0$, the CTBN remained for a long time in
state $\{A=1, B=1, C=1\}\in S$.

\subsection{Sentry State Identification}

In order to identify a sentry state, we need to take a further step from the
heuristic definition of a sentry state we have just given and formalize the
concept. For this purpose, we first compute the expected (discounted) number of
transitions for each state of the CTBN. This can be achieved by using the lump
sum reward in (\ref{eq:lump_sum_reward}) to obtain the \emph{Expected Discounted
Number of Transitions} (EDNT) of each state $x\in S$,

\begin{equation}
  EDNT_\alpha(x) = E \left[ \sum_{i=0}^\infty
    e^{-\alpha t_{i+1}}C(X(t_i), X(t_{i+1}))  \right] : C(x,x') = 1.
\label{eq:EDNT}
\end{equation}

There is no guarantee that a state with high EDNT is often the starting point of
a cascade. States that tend to occur in the middle of a cascade may easily have
a high EDNT if the cascade tends to continue after reaching that state. We are
interested in early detection of cascades and the solution we propose is to take
into account the number of transitions in the neighborhood $\text{Ne}_{\G_s}(x)$
of the state $x\in S$. For this purpose, we define a new quantity called
\emph{Relative Expected Discounted Number of Transitions} (REDNT),
\begin{equation}
  REDNT_\alpha(x) = \max_{x' \in \text{Ne}_{\G_s}(x)}
    \frac{EDNT_\alpha(x)}{EDNT_\alpha(x')}
\label{eq:REDNT}
\end{equation}
where $\alpha$ in (\ref{eq:EDNT}) and (\ref{eq:REDNT}) is the discounting factor
as in (\ref{eq:infinite_horizon}), and the neighborhood in (\ref{eq:REDNT})
refers to the undirected state space graph (see Figure
\ref{fig:chain3:state_space}). The central idea is that a large ratio between
two adjacent states implies that transition from one to the other leads to a
significant change in the expected discounted number of transitions. We will use
REDNT to identify potential sentry states (states with high values of REDNT are
likely sentry states).

One could propose other ways to aggregate EDNT across different states. We focus
on REDNT as defined above in the interest of brevity. We let $\Bar{s}$ denote
the number of alarms that are \emph{on} in the state $s = (s_1,s_2,\ldots,s_L)$,
$\Bar{s} = \sum_{i = 1}^L s_i$. In the alarm data application, we are mostly
interested in sentry states such that $\Bar{s}$ is fairly small. States with
large $\Bar{s}$ may also have large REDNT values; however, these are states that
occur when a cascade is already happening. As we want early detection, we should
focus on sentry states such that $\Bar{s}$ is small.

\subsection{Monte Carlo Algorithm}

We are now left with the problem of estimating the EDNT of each state from which
we can compute the REDNT. We propose a Monte Carlo approach based on Algorithm
\ref{algo:ctbn_sample} from  \cite{nodelman2007continuous}. This sampling
algorithm starts from an initial state $X(0)$ and generates a single trajectory
$\sigma$ ending at time $t_{end}$. After the \textit{initialization} phase the
algorithm enters into a loop. At each iteration, the algorithm samples a time to
transition for each of the variables, identifies the next \textit{transitioning
variable}, generates the \textit{next state}, and \textit{resets the time to
transition} for the transitioned variable and all its children. We combine
Algorithm \ref{algo:ctbn_sample} with (\ref{eq:EDNT}) to compute
\begin{equation}
  \widehat{EDNT}_\alpha(x) = \frac{1}{|\bm\sigma|} \sum_{\sigma \in \bm\sigma}
    \sum _{i=0}^{|\sigma|}e^{-\alpha t_i}C(x(t_i), x(t_{i+1}))
\label{eq:EDNT_est}
\end{equation}
where $|\bm\sigma|$ is the number of trajectories generated by Algorithm
\ref{algo:ctbn_sample} and $|\sigma|$ represents the number of events in the
trajectory $\sigma$.

\begin{algorithm}
  \caption{Forward Sampling for CTBN.}
  \begin{algorithmic}
    \vspace{0.0cm}
    \Procedure{CTBN-Sample}{$X(0)$, $t_{end}$}
      \State $t \gets 0$, $\sigma \gets \{\langle 0, X(0)\rangle\}$
        {\footnotesize\Comment Initialization}
      \Loop
        \ForAll{ $\X_i \in \X$ s.t. $Time(\X_i)$ is undefined}
          {\footnotesize\Comment Time to transition sampling} \State $\Delta t
          \gets$ draw a sample from an exponential with rate
          $q_{\X_i(t)|\pa(\X_i(t))}$
          \State $Time(\X_i) \gets t + \Delta t$
        \EndFor
        \State $j = \arg\min_{\X_i \in \X}\left[ Time(\X_i) \right]$
          {\footnotesize\Comment Transitioning variable}
        \If{$Time(\X_j) > t_{end}$}
          \State \textbf{return} $Tr$
        \EndIf
        \State $x_j \gets $ draw a sample from a multinomial with
          $\theta_{\X_j(t)|\pa(\X_j(t))}$
          {\footnotesize\Comment Next state}
        \State $t \gets Time(\X_j)$
        \State Add $\langle t, X(t) \rangle$ to $\sigma$
        \State Undefine $Time(\X_j)$ and $Time(\X_i)$ $\forall \X_i\in \pa(\X_j)$
          {\footnotesize\Comment Reset the time to transition}
      \EndLoop
      \State \textbf{return} $\sigma$
    \EndProcedure
  \end{algorithmic}
\label{algo:ctbn_sample}
\end{algorithm}

In order to compute $\widehat{EDNT}_{\alpha}$, we need to set the values of the
following hyperparameters:
\begin{enumerate}
  \item $\alpha$, the discounting factor;
  \item $t_{end}$, the ending time for each trajectory;
  \item $|\bm\sigma|$, the number of trajectories to be generated.
\end{enumerate}
Choosing the discounting factor $\alpha$ and the ending time $t_{end}$, we
decide the importance of the \textit{distant future} and appropriate values
depend on the application. On the other hand, $|\bm\sigma|$ controls the
trade-off between the quality of the approximation and its computational cost:
We can choose its value using a stopping-rule approach based on variance as
proposed in \cite{bicher2022review}.

\section{Graphical Information}
\label{sec:graphicalinfo}

This paper proposes the CTBN as a modeling tool for systems with cascades.
However, CTBNs have other useful properties: The interplay between the graph and
the probabilistic model facilitates both communication with subject matter
experts and easy computation of various statistics that summarize the learned
system.

We define the \emph{parent set} of $A$, $\pa(A) = \left(\bigcup_{\X_i
\in A} \pa(\X_i)\right) \setminus A$. Note that $\pa(A) \cap \{A\}=\emptyset$
for all $A$. We define the \emph{closure} of $A$, $\cl(A)= \pa(A)\cup \{A\}$. A
\emph{walk} is a sequence of adjacent edges and a \emph{path} is a sequence of
adjacent edges such that no node is repeated. We say that $\X_i$ is an
\emph{ancestor} of $\X_j$ if there exists a \emph{directed} path from $\X_i$ to
$\X_j$, $\X_i \rightarrow \ldots \rightarrow \X_j$, such that all the edges
point toward $\X_j$. We define $\an(A)$ to be the set of nodes that are in $A$
or are ancestors of a node in $A$. Therefore, $A\subseteq \an(A)$ for all $A$.
We say that the node set $A$ is \emph{ancestral} if $\an(A) = A$, that is, if
$A$ contains all ancestors of every node in $A$. In Figure
\ref{fig:3_nodes_experiment}, the set $\{A,B\}$ is ancestral while the set
$\{B,C\}$ is not ancestral. We let $\G_A$ denote the graph with node set $A$
such that for $\X_i,\X_j\in A$, the edge $\X_i\rightarrow \X_j$ is in $\G_A$ if
$\X_i\rightarrow \X_j$ is in $\G$. We construct the \emph{moral graph}, $\G^m$,
of $\G$ by replacing all edges with \emph{undirected} edges, $-$, and adding an
undirected edge between two nodes if there exists a node of which they are both
parents. In an undirected graph and for disjoint $A$, $B$, and $C$, we say that
$A$ and $B$ are \emph{separated} by $C$ if every path between $A$ and $B$ is
intersected by $C$.

\subsection{Decomposition Properties}

The graph of a CTBN has a clear interpretation, as the transition rate of
$\X_i$ only depends on the current value of $\cl(\X_i)$. The
following results provide another interpretation of the graph in terms of
conditional independence. We let $\overline{\X}_A(t)$ denote the process
$A$ until time point $t$, $\overline{\X}_A(t) = \{{X}_i(s): i\in A,
s\leq t\}$. The next result follows from Proposition 5 in
\cite{didelez2007graphical}.

\begin{prop}
  Let $\G = (\X, E)$ be the graph of a CTBN, and let $A,B,C\subseteq V$ be
  disjoint. If $A$ and $B$ are separated by $C$ in $(G_{\an(A\cup B \cup
  C)})^m$, then $\overline{\X}_A(t) \indep \overline{\X}_B(t) \mid
  \overline{\X}_C(t)$ for all $t$.
\label{prop:indep}
\end{prop}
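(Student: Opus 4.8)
The plan is to reduce this statement to the general theory of graphical models for marked point processes developed in \cite{didelez2007graphical}, since a CTBN is a special case of a composable marked point process. First I would make the translation between the two frameworks precise. A CTBN with finite factored state space $S = \bigtimes_j S_j$ and the assumption that only one component transitions at any given time is a multivariate marked point process in which each component $\X_j$ generates events (its own transitions), and the conditional intensity matrix $\mathbf{Q}_{\X_j \mid \pa(\X_j)}$ specifies an intensity for $\X_j$ that is adapted to, and depends only on, the current state of $\cl(\X_j) = \pa(\X_j) \cup \{\X_j\}$. This is exactly the statement that $\X_j$ is locally independent of every process outside $\cl(\X_j)$ in the sense of Didelez. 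In particular, the edge convention matches: $\X_i \rightarrow \X_j$ in $\G$ (meaning the rates of $\X_j$ depend on $\X_i$) corresponds to the directed edge into $\X_j$ in the local independence graph. Hence $\G$ coincides with the local independence graph of the process.

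Having established this correspondence, the second step is to check that the separation criterion in the hypothesis matches the one used in Didelez's global Markov property. Proposition 5 of \cite{didelez2007graphical} states a global Markov property for local independence graphs: if $A$ and $B$ are separated by $C$ in the moral graph of the ancestral set $\an(A \cup B \cup C)$, then the histories of $A$ and $B$ up to any time $t$ are conditionally independent given the history of $C$ up to $t$. Since our hypothesis is phrased using precisely $(\G_{\an(A\cup B\cup C)})^m$ and the same undirected separation notion, and since $\overline{\X}_A(t)$, $\overline{\X}_B(t)$, and $\overline{\X}_C(t)$ are exactly the histories appearing in that proposition, the conclusion transfers verbatim to $\overline{\X}_A(t) \indep \overline{\X}_B(t) \mid \overline{\X}_C(t)$ for all $t$.

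The hard part will be the first step, namely verifying that a CTBN genuinely satisfies the regularity and composability conditions under which Proposition 5 of \cite{didelez2007graphical} is proved. Concretely, I would need to confirm: (i) that the finite factored state space together with the CTBN assumption of no simultaneous component transitions makes the process composable, so that marks can be identified with components; (ii) that the conditional intensities are predictable and adapted to the natural filtration, which holds because each intensity is a function of the left limit of the state of $\cl(\X_j)$; and (iii) that the local independence structure read off from the CIMs coincides exactly with the directed edges of $\G$, with no spurious or missing dependences. Once these checks are in place, the proposition is an immediate specialization of Didelez's result, so the substance of the argument lies in this verification rather than in any new probabilistic computation.
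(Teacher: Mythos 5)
Your proposal is correct and takes essentially the same route as the paper: the paper offers no separate proof and simply notes that the result ``follows from Proposition 5 in \cite{didelez2007graphical}'', which is exactly the reduction you carry out (your additional verification that a CTBN is a composable finite Markov process whose graph is the local independence graph is the translation the paper leaves implicit). One small correction: \cite{didelez2007graphical} concerns composable finite Markov processes rather than general marked point processes, but this only strengthens your case since CTBNs fit that narrower framework directly.
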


The above result allows us to decompose the learned system into components $A$
and $B$ that operate independently conditionally on $C$. That is, all dependence
between $A$ and $B$ is `explained' by $C$. However, the graph may not be very
informative to experts if it contains too many nodes. We address this issue now
and further results are in Appendix \ref{app:graph}.

\subsection{Hierarchical Analysis}

The graph of a CTBN may be too large to be easily examined visually. If the
number of components in a system is large experts mostly reason about groups of
components. For instance, in the alarm data, the system is known to comprise
different subsystems, which form a natural partition of the components
$\X$. We show that Proposition \ref{prop:indep} still applies in an aggregated
version of the graph. We define a \emph{graph partition} to formalize this.

\begin{defn}[Graph partition]
  Let $\G = (\X,E)$ be a directed graph and let $D = \{D_1, \ldots, D_m\}$ be a
  partition of $\X$. The \emph{graph partition} of $\G$ induced by $D$,
  $\mathcal{D}$, is the directed graph $(D,E_D)$ with node set $D$ such that
  $D_k \rightarrow D_l$, $k\neq l$, is in $E_D$ if and only if there exists
  $\X_i \in D_k$ and $\X_j \in D_l$ such that $\X_i \rightarrow \X_j$ in $\G$.
\label{def:graphPart}
\end{defn}

In a graph partition, each node, $D_k \in D$, corresponds to a subset of the
node set in the original graph. Underlined symbols, e.g., $\underline{A},$
represent subsets of $D = \{D_1,\ldots,D_m\}$. When $\underline{A}\subseteq  D$,
we let $A$ denote the corresponding nodes in the original graph, $A=
\bigcup_{D_i\in \underline{A}} D_i$. The following is an extension of the
classical separation property to a graph partition and it means that separation
in a graph partition can be translated into separation in the underlying graph.
This is in turn implies a conditional independence in the CTBN.

\begin{prop}
  Let $D$ be a partition of $V$ and let $\mathcal{D}$ be the graph partition
  induced by $D$. Let $\underline{A},\underline{B},\underline{C} \subseteq D$ be
  disjoint. If $\underline{A}$ and $\underline{B}$ are separated by
  $\underline{C}$ in $(\mathcal{D}_{\an(\underline{A}\cup
  \underline{B}\cup\underline{C})})^m$, then $A$ and $B$ are separated by $C$ in
  $(\G_{\an(A \cup B\cup C)})^m$.
\label{prop:indepPart}
\end{prop}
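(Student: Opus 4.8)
The plan is to prove the contrapositive: assuming $A$ and $B$ are \emph{not} separated by $C$ in $(\G_{\an(A\cup B\cup C)})^m$, I will produce a connecting path witnessing that $\underline A$ and $\underline B$ are not separated by $\underline C$ in $(\mathcal{D}_{\an(\underline A\cup \underline B\cup \underline C)})^m$. Throughout, write $D(v)\in D$ for the unique block of the partition containing a node $v\in V$. Since $D$ partitions $V$ and $A=\bigcup_{D_i\in \underline A}D_i$ (and likewise for $B$ and $C$), we have the basic equivalence $v\in C \iff D(v)\in\underline C$, and similarly for $A$ and $B$; in particular a node lies outside $C$ exactly when its block lies outside $\underline C$.

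The key lemma I would establish first is that the block map preserves ancestry: if $v\in\an(A\cup B\cup C)$ in $\G$, then $D(v)\in\an(\underline A\cup \underline B\cup\underline C)$ in $\mathcal{D}$. To see this, take a directed path $v=v_0\to \cdots \to v_k=w$ in $\G$ with $w\in A\cup B\cup C$. Reading off the blocks and deleting repetitions of consecutive equal blocks yields, by Definition \ref{def:graphPart}, a directed walk from $D(v)$ to $D(w)$ in $\mathcal{D}$, hence a directed path, so $D(v)$ is an ancestor of $D(w)$ (or equals it). As $D(w)\in\underline A\cup\underline B\cup\underline C$, the lemma follows. Note this argument does not use acyclicity of $\G$, which matters since CTBN graphs may be cyclic.

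Next I would take a path $\pi$ in $(\G_{\an(A\cup B\cup C)})^m$ joining some $a\in A$ to some $b\in B$ and avoiding $C$, and project it node-by-node to its sequence of blocks. By the lemma all these blocks lie in $\an(\underline A\cup\underline B\cup\underline C)$, so they belong to the node set being moralized in $\mathcal{D}$; it then remains to check that consecutive nodes $u,w$ of $\pi$ map to blocks that are equal or adjacent in $(\mathcal{D}_{\an(\underline A\cup\underline B\cup\underline C)})^m$. There are two kinds of edges to treat. If $u-w$ arises from a directed edge of $\G$ (say $u\to w$), then either $D(u)=D(w)$ (nothing to check) or $D(u)\to D(w)$ is in $\mathcal{D}$, which moralizes to the required undirected edge. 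If $u-w$ is a marriage edge, coming from a common child $z\in\an(A\cup B\cup C)$ with $u\to z\leftarrow w$, then again either $D(u)=D(w)$, or I would distinguish whether $D(z)$ coincides with one of $D(u),D(w)$ (in which case a genuine directed edge of $\mathcal{D}$ already joins the two blocks) or is distinct from both (in which case $D(u)\to D(z)\leftarrow D(w)$ is a common-child configuration in $\mathcal{D}$, and since $D(z)\in\an(\underline A\cup\underline B\cup\underline C)$ by the lemma, moralization adds exactly the edge $D(u)-D(w)$).

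Finally, collapsing repeated blocks turns the projected sequence into a walk in $(\mathcal{D}_{\an(\underline A\cup\underline B\cup\underline C)})^m$ from $D(a)\in\underline A$ to $D(b)\in\underline B$; since $\pi$ avoids $C$, every block on it avoids $\underline C$ by the basic equivalence above. Extracting a simple path from this walk contradicts the assumed separation of $\underline A$ and $\underline B$ by $\underline C$, completing the contrapositive. I expect the marriage-edge case to be the main obstacle: it is the only place where moralization and the partition interact nontrivially, and precisely in the sub-case where $z$ sits in a third block it is the ancestry-preservation lemma that guarantees the married pair is actually joined in the moralized \emph{ancestral} partition graph rather than merely in $\mathcal{D}^m$.
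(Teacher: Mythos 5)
Your proof is correct and takes essentially the same route as the paper's: pass to the contrapositive, take a $C$-avoiding connecting path in $(\G_{\an(A\cup B\cup C)})^m$, project it through the block map, check that each directed edge and each marriage edge survives projection into $(\mathcal{D}_{\an(\underline{A}\cup\underline{B}\cup\underline{C})})^m$, then collapse repeated blocks and extract a $\underline{C}$-avoiding path from $\underline{A}$ to $\underline{B}$. You are in fact somewhat more explicit than the paper on two points it glosses over, namely the ancestry-preservation lemma for the block map and the sub-case in which the common child of a marriage edge lies in the same block as one of its parents.
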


\begin{exmp}[Simplified ESS alarm network]
  We revisit the example in Figure \ref{fig:systemgraph} and denote the graph on
  the left by $\G$. This graph represents a CTBN as introduced in Section
  \ref{sec:models}. $\text{System 1}$, $\text{System 2}$, $\text{System 3}$, and
  $\text{System 4}$ constitute a partition, $D$, of the node set of $\G$ and we
  let $\mathcal{D}$ denote the corresponding graph partition (Figure
  \ref{fig:systemgraph} (right)). If we let $\underline{A} = \{\text{System
  1}\}$, $\underline{B} = \{\text{System 3}\}$, and $\underline{C} =
  \{\text{System 2},\text{System 4}\}$, then $\underline{A}$ and $\underline{B}$
  are separated by $\underline{C}$ in $(\mathcal{D}_{\an(\underline{A}\cup
  \underline{B}\cup\underline{C})})^m$ (in this case $\an(\underline{A}\cup
  \underline{B}\cup\underline{C}) = \{\text{System 1}, \text{System 2},
  \text{System 3}, \text{System 4}\}$ and $(\mathcal{D}_{\an(\underline{A}\cup
  \underline{B}\cup\underline{C})})^m$ is simply the undirected version of
  $\mathcal{D}$ as every node only has a single parent). The set $\underline{A}$
  corresponds to processes $A = \{P1, T1\}$, $\underline{B}$ corresponds to
  processes $\{\text{P3},\text{T3}\}$, and $\underline{C}$ corresponds to
  $\{\text{P2},\text{T2}, \text{S1},\text{S2}, \text{S3},\text{S4}\}$.
  Proposition \ref{prop:indepPart} gives that $A$ and $B$ are separated by $C$
  in $(\G_{\an({A}\cup {B}\cup {C})})^m$. It follows from Proposition
  \ref{prop:indep} that the processes in $A$ and the processes in $B$ are
  independent when conditioning on the processes in $C$. This means that the
  state of System 1 is irrelevant when reasoning about the state of System 3 if
  we already account for Systems 2 and 4. Using this procedure, conditional
  independences can be found using both the original graph and a graph
  partition. Furthermore, in both $\G$ and $\mathcal{D}$, the edges have a
  simple interpretation: The transition rates of the processes corresponding to
  a node only depend on the processes corresponding to the parent nodes.
\end{exmp}

\subsubsection{Condensation}

The graph $\G$ of a CTBN may be cyclic. A possible simplification is to collapse
each cyclic component into a node to form the \emph{condensation} of $\G$ which
is a \emph{directed acyclic graph}. We say that $A\subseteq V$ is a
\emph{strongly connected component} if for every $\X_i \in A$ and every
${\X}_j\in A$ there exists a directed path from $\X_i$ to ${\X}_j$. The strongly
connected components constitute a partition of $V$ and the \emph{condensation}
is the graph partition they induce. The condensation has some properties that do
not hold for general graph partitions (see Appendix \ref{app:graph}).

\begin{defn}[Condensation]
  Let $\G$ be a directed graph and let $D = \{D_1, \ldots, D_m\}$ be its
  strongly connected components. We say that the graph partition of $\G$ induced
  by $D$ is the \emph{condensation} of $\G$.
\end{defn}

\section{Numerical Experiments and Examples}
\label{sec:num}

We now study the performance of the proposed approach. We generate synthetic
data from CTBNs such that the sentry states are known. Data is generated from
different CTBNs (additional experiments are in Appendix \ref{app:synth}). In all
of them,
\begin{itemize}
  \item each process, $\X_j\in \X$, has a binary state space.
  \item the CTBN consists of \emph{slow processes} and \emph{fast processes}.
  \item each process, $\X_j \in \X$, replicates the state of its parent
    processes, $\pa(\X_j)$.
  \item if a process, $\X_j \in \X$, has more than one parent, it stays in state
    0 with high probability if at least one of its parents is in state 0.
\end{itemize}

\paragraph*{Experiment 1}

The first synthetic experiment is based on a CTBN model whose graph $\G$ is a
chain consisting of three nodes $A$, $B$, and $C$ (Figure
\ref{fig:3_nodes_experiment}). The corresponding CIMs for the processes $A$,
$B$, and $C$ are shown in Table \ref{table:3_nodes_cims} in Appendix
\ref{app:synth}. This CTBN describes a structured stochastic process such that
the root process, $A$, changes slowly from the state no-alarm (0) to the state
alarm (1) and vice versa. This can be seen from the CIM corresponding to process
$A$. The CIMs associated with processes \textit{B} and \textit{C} make these two
processes replicate the state of their parent process and this happens at a
faster rate. Therefore, starting from $(0,0,0)$, if process $A$ changes its
state, process $B$ quickly changes its state to match that of its parent $A$.
The same holds true for the process $C$. For this reason, we expect $\{A=1, B=0,
C=0\}$ to be a sentry state because as soon as the process $A$ transitions from
state 0 to state 1, a fast sequence of transitions (a cascade of events) makes
the processes $B$ and $C$  transition from  state 0 to  state 1. This behavior
is shown in Figure \ref{fig:3_nodes_trajectory}. Estimates of the REDNT quantity
are shown in Table \ref{table:3_nodes_results} and they confirm that $\{A=1,
B=0, C=0\}$ is a sentry state.

\begin{figure}[t]
  \begin{subfigure}[c]{0.4\textwidth}
  \centering
    \includegraphics[width=\textwidth]{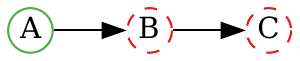}
    \caption{}
    \label{fig:3_nodes_experiment}
  \end{subfigure}
  \begin{subfigure}[c]{0.6\textwidth}
  \centering
    \includegraphics[width=\textwidth]{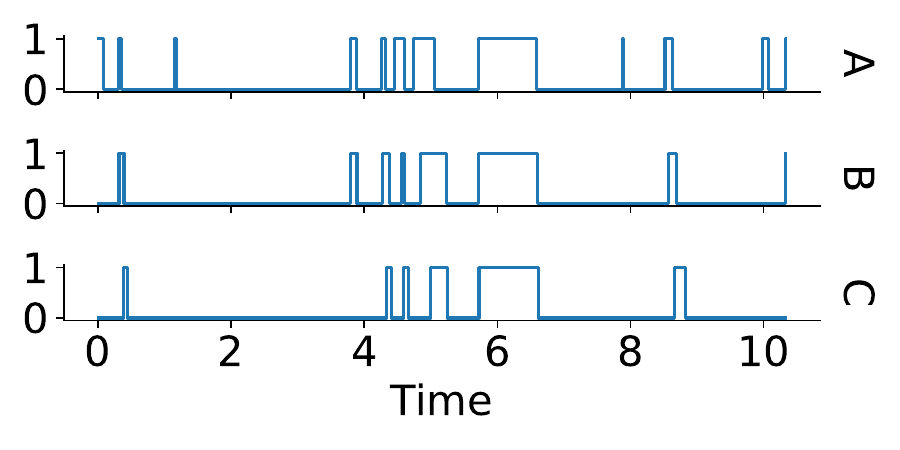}
    \caption{}
    \label{fig:3_nodes_trajectory}
  \end{subfigure}
  \caption{(a) depicts the graph $\G$ of a CTBN. Its CIMs are in Table
    \ref{table:3_nodes_cims} in the appendix. Slow processes are represented by
    solid line nodes, while fast processes are represented by dashed line nodes.
    Colors describe the most likely sentry state, $s = (s_1,s_2,s_3)$, in this
    system: If a node is green, the corresponding alarm is 1 (\emph{on}) in $s$.
    If a node is red, the corresponding alarm is 0 (\emph{off}) in $s$. (b)
    shows an example trajectory from the CTBN represented in Figure
    \ref{fig:3_nodes_experiment}. Each function in the plot represents the
    evolution of one of the three binary processes, $A$, $B$, and $C$. }

\label{fig:chain3sentry}
\end{figure}

\begin{figure}[t]
  \begin{subfigure}[c]{0.4\textwidth}
  \centering
    \includegraphics[width=\textwidth]{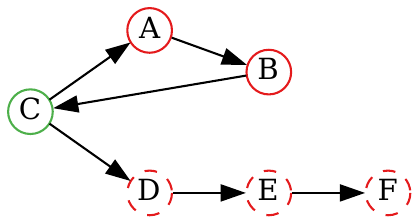}
    \caption{}
    \label{fig:6_nodes_experiment}
  \end{subfigure}
  \begin{subfigure}[c]{0.6\textwidth}
  \centering
    \includegraphics[width=\textwidth]{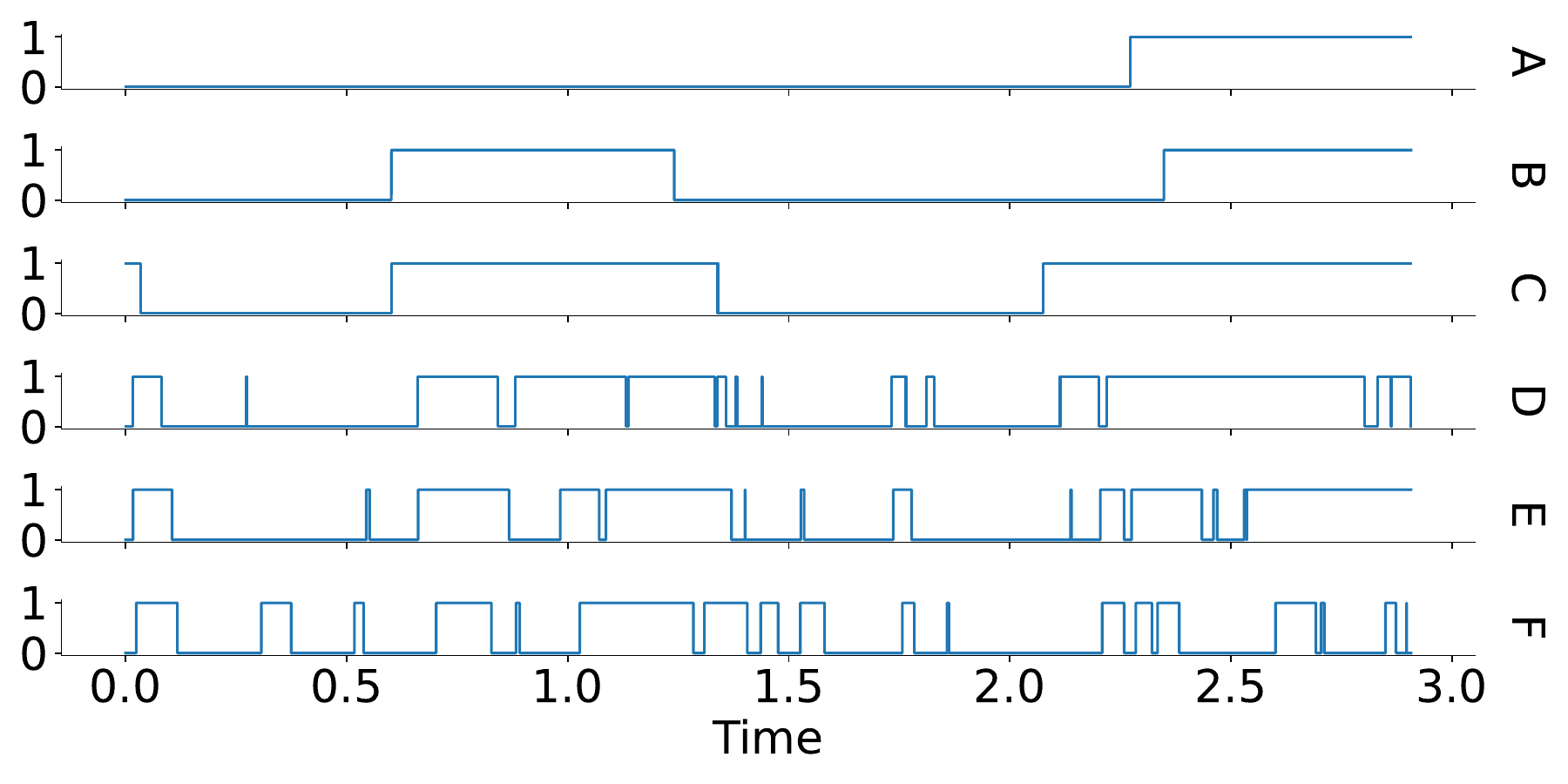}
    \caption{}
    \label{fig:6_nodes_trajectory}
  \end{subfigure}

  \caption{(a) Graph $\G$ of a CTBN model consisting of six processes. The graph
    $G$ contains the cycle ($A$, $B$, $C$) as well as the chain ($D$, $E$, $F$).
    See the caption of Figure \ref{fig:chain3sentry} for an explanation of the
    node colors.}
\end{figure}

\paragraph*{Experiment 2}

The second synthetic experiment is based on the CTBN shown in Figure
\ref{fig:6_nodes_experiment} which consists of a slow cycle ($A$, $B$, $C$) and
a fast chain ($D$, $E$, $F$). In this CTBN, the sentry state is expected to be
$\{A=0, B=0, C=1, D=0, E=0, F=0\}$. Figure \ref{fig:6_nodes_trajectory} shows
that this state triggers a fast sequence of alarms in the chain ($D$, $E$, $F$)
and a slow sequence of alarms in the cycle ($A$, $B$, $C$). Estimates of the
REDNT quantity are shown in Table \ref{table:6_nodes_results} and they confirm
that $\{A=0, B=0, C=1, D=0, E=0, F=0\}$ is a sentry state.

\paragraph*{Comparison}

We compare the REDNT method to the naive approach proposed in Appendix
\ref{subsection:cascade_identification}. In synthetic data it is easier to
identify the two parameters of the naive approach. Each synthetic experiment has
only two transition rates and we can let the parameter $\lambda_{ft}$
\footnote{Threshold between a slow and a fast transition (Appendix
\ref{subsection:cascade_identification}).} be the median elapsed time between
two consecutive events when combining events of all types. The parameter
$\lambda_{mcl}$ \footnote{It determines the minimum number of fast consecutive
events to be considered a cascade  See Appendix
\ref{subsection:cascade_identification}.} can be determined based on the
structure of the network. For instance, in the example in Figure
\ref{fig:3_nodes_experiment} we expect a cascade to have at least two
transitions,
\begin{equation*}
  \{A=1, B=0, C=0\} \rightarrow  \{A=1, B=1, C=0\} \rightarrow \{A=1, B=1, C=1\}.
\end{equation*}
To identify sentry states using the naive approach we should simply identify the
cascades of events and compute the fraction of times that observing a specific
state coincides with the start of a cascade. As already mentioned in Section
\ref{sec:sentry}, we are interested in sentry states with a low number of active
alarms. For this reason, we consider only states such that the number of active
alarms is less than or equal to the size of the largest parent set in the true
graph. The naive approach and the REDNT both produce a list where states are
ordered from the most likely sentry state to the least likely. We compare the
two approaches with the \textit{Jaccard similarity} \cite{murphy1996finley}
using the $K$ most likely sentry states. We tested our approach on 6 different
structures with different numbers of nodes. Results are reported in Figure
\ref{fig:jaccard}. In every experiment, the two methods share at least one state
in their top-two lists. It is important to emphasize that the parameters of the
naive method have been set knowing the length of cascades. Conversely, the REDNT
method does not require this knowledge in order to identify sentry states.

\begin{figure}[t]
  \centering
  \includegraphics[width=\textwidth]{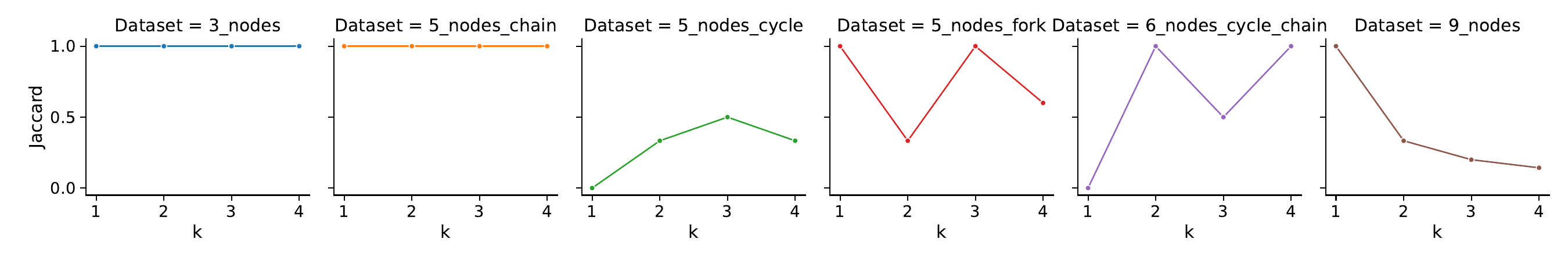}
  \caption{This figure reports the Jaccard similarity@k between the REDNT and
    the naive approach. The x-axis represents the number of states taken into
    account from the ordered lists generated by the two methods. The structures
    of the networks used for the experiments are depicted in Figures
    \ref{fig:3_nodes_experiment}, \ref{Fig:5_nodes_chain},
    \ref{Fig:5_nodes_cycle}, \ref{Fig:5_nodes_fork},
    \ref{fig:6_nodes_experiment}, and \ref{Fig:9_nodes}.}
\label{fig:jaccard}
\end{figure}

\paragraph*{ESS Data Set}

The last experiment is performed on a real data set provided by the European
Spallation Source ERIC (ESS) as described in Section  \ref{appendix:Dataset}.
The data set consists of observations of 138 alarm processes from January 2020
to March 2023. No structure was provided, thus we use the score-based structure
learning algorithm presented in \cite{nodelman2007continuous}. We chose not to
use the constraint-based algorithm \cite{BREGOLI2021105} because, in the case of
binary variables, it has been shown to be outperformed by the score-based
algorithm. The score-based algorithm penalizes the size of the parent sets,
leading to sparsity in the graphical structure. For this data, the learned graph
is composed of disconnected components. We only present the results of applying
the REDNT method for one of them. The most likely sentry state has the alarm
\textit{SpeedHighFault} set to \emph{on} and everything else set to \emph{off}
(see Figure \ref{fig:ESS_SS}). We observe that the connected component which
contains \textit{SpeedHighFault} is a rooted directed tree, and that
\textit{SpeedHighFault} is the root. This means that an alarm in
\textit{SpeedHighFault} propagates along the directed paths in the tree. A CTBN
assumes that at most one event occurs at any point in time. This is reasonable
in this application because of the high sampling rate.

The four \textit{PressureRatioHighFault} alarms in Figure \ref{fig:ESS_SS} could
be verified as consequences of the root cause \textit{SpeedHighFault}, both from
documentation and by an experienced operator. On the other hand, the alarms
\textit{StateIntervened}, \textit{CabinetFault}, and \textit{StateFault} were
not evident from the documentation and were not expected to be related to the
root alarm \textit{SpeedHighFault}. If these connections are real, this
information is relevant to operators as they may look for reasons for this
connection and enhance their process understanding. Moreover, the identified
root alarm \textit{SpeedHighFault} can be given a high priority to ensure that
the operators will be made aware of a potential cascade starting from this
alarm.

\begin{figure}[t]
  \centering
  \includegraphics[width=\textwidth]{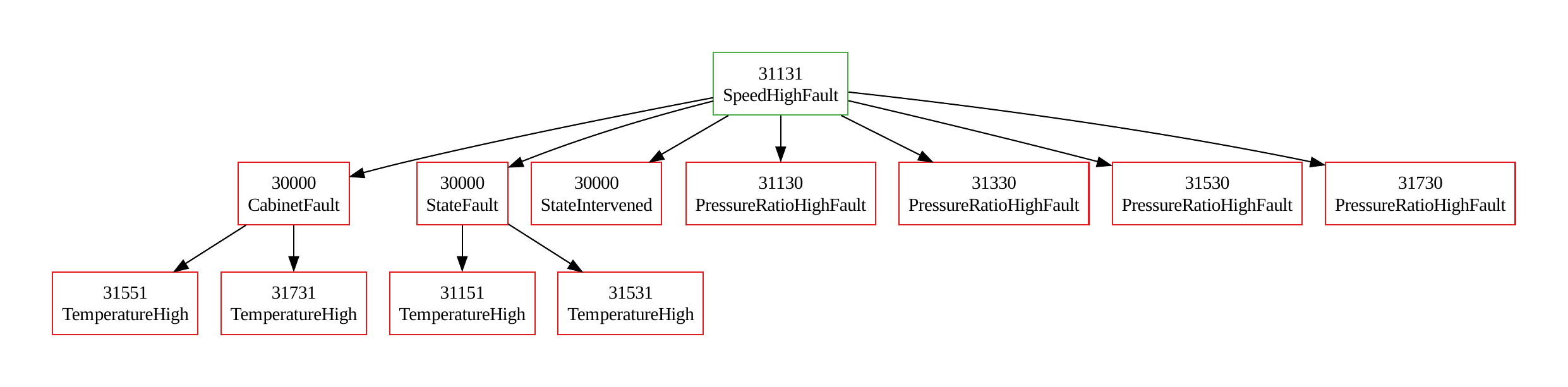}
  \caption{Graph $\G$ of the CTBN learned from the ESS data set. Figure
    \ref{fig:chain3sentry} explains the node colors.}
\label{fig:ESS_SS}
\end{figure}

The CTBN was learned with no structural input, but using a prior distribution
one can provide engineering knowledge to a Bayesian learning method. We believe
that the graphical output can easily be shown to and interpreted by engineers;
however, user studies are needed to validate this. Results like Propositions
\ref{prop:indep} and \ref{prop:indepPart} allow a formal interpretation and help
us find those components of the system that function independently when
accounting for other parts of the machine. Moreover, the sentry states that are
identified can be presented to the operators. In the example shown in figure
\ref{fig:ESS_SS} the sentry state is covered by an already existing alarm, but
if a more complex sentry state would have been identified then an additional
alarm would be needed. Using domain expertise, one can assign appropriate
instructions to sentry states and these can be presented to operators when the
machine reaches a fault state. When we have learned a CTBN from data, we may
also compute the risk of reaching each of the sentry states from the current
state. This can be provided to operators in real time to facilitate early
mitigation. This should be studied in detail in future work.

\begin{exmp}[Simplified ESS alarm network]
  We now return to our running example. This is an example system and it does
  not correspond to the learned network. Imagine that we start from the state
  with all alarms \emph{off}. Assume we have learned from data, using the CTBN
  framework, that the $\text{P1}$ alarm is very likely to go \emph{on} if both
  $\text{S4}$ and $\text{T1}$ alarms are \emph{on}. Furthermore, assume that
  $\text{P2}$ is very likely to go \emph{on} when $\text{P1}$ is \emph{on} and
  that $\text{P3}$ is very likely to go \emph{on} when $\text{P2}$ is \emph{on}.
  In this case, the state such that $\text{S4} = 1$, $\text{T1} = 1$, and such
  that every other node is zero is likely to be a sentry state. This knowledge
  can be useful in two ways. First, this can be presented to experts so that
  they can map common cascades, and their sentry state starting points, to
  underlying causes using domain knowledge. Second, during operation a warning
  can be issued when reaching a sentry state, along with the recommended course
  of action. It is also possible to compute, given the current state, the risk
  of reaching each of the sentry states within some time interval to facilitate
  an earlier warning if the expected time from reaching the sentry state to the
  actual cascade does not suffice for mitigation.
\label{exmp:system3}
\end{exmp}

\section{Discussion}
\label{sec:discussion}

In this paper we defined the concept of sentry states in CTBNs and we presented
a naive approach and a heuristic (REDNT) for identifying such sentry states. The
synthetic experiments showed that REDNT can identify the configuration of the
network from which a fast sequence of events starts. A key limitation is the
fact that the REDNT heuristic is computed for each state and the number of
states is exponential in the number of nodes. However, the simplicity of its
implementation and the effectiveness showed in the synthetic experiments make
the REDNT heuristic attractive. Moreover, only states with few active alarms may
be of interest and this reduces the computational cost. The proposed heuristic
assigns a score to each state in the state space of a CTBN; a possible extension
of this work is the identification of the contribution of each process to the
REDNT.

This paper laid the theoretical groundwork for the implementation of online
early warning systems based on the identification of sentry states. In practical
implementations, a list of sentry states can be provided to domain experts for
them to formulate appropriate actions in order to mitigate alarm cascades. This
is left for future work. Moreover, the graph representing a learned CTBN
indicates how the behavior of each alarm process depends on the states of the
other alarm processes. As illustrated in this paper, this graph also represents
conditional independences in the system. In future work, we hope to demonstrate
that the intended end users, engineers and system operators, also find this
graphical tool useful.

\subsection{Acknowledgments}

The authors would like to thank Per Nilsson for sharing his knowledge about the
cryogenics plant and for providing valuable feedback on the work presented in
this paper.

% \bibliography{references}

\appendix

\setcounter{table}{0}
\renewcommand{\thetable}{A\arabic{table}}
\setcounter{figure}{0}
\renewcommand{\thefigure}{A\arabic{figure}}
\section{Graphical information}
\label{app:graph}

The following proposition follows from Proposition 4 in
\cite{didelez2007graphical} and Theorem 2 in \cite{schweder1970composable}.

\begin{prop}
  If $A$ is ancestral, then the subprocess $\X_A = (\X_i)_{i\in A}$ is a CTBN
  with transition matrices $Q_{\X_i\vert \pa(\X_i)}$ and graph $\G_A$.
\end{prop}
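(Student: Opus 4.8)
The plan is to reduce the claim to a statement about the marginal (projected) process of the amalgamated CTMP and to exploit the fact that an ancestral set contains all of its own parents. Recall from the footnote in Section~\ref{sec:models} that the CTBN can be amalgamated into a single CTMP on the factored state space $S$ whose intensity matrix $Q$ has nonzero off-diagonal entries only for single-coordinate transitions: the rate of changing coordinate $i$ from $x_i$ to $y_i$ while leaving all other coordinates fixed is the CIM entry $q^{(i)}_{x_i y_i \mid \pa(\X_i)}$, evaluated at the current parent configuration. The key observation is that, because $A$ is ancestral, $\pa(\X_i) \subseteq A$ for every $i \in A$ (a parent is in particular an ancestor). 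Consequently the rate governing any transition of an $A$-coordinate depends only on the $A$-coordinates of the current state and not at all on the coordinates in $V \setminus A$.

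First I would make this independence of the complement precise and use it to verify a lumpability condition for the projection $\pi : S \to \bigtimes_{i \in A} S_i$, $x \mapsto x_A$. For any two full states $x, \tilde{x}$ with $x_A = \tilde{x}_A$ and any target value $y_A \neq x_A$ of the $A$-coordinates, the total one-step rate from $x$ into the lump class $\{z : z_A = y_A\}$ equals that from $\tilde{x}$: since only one coordinate changes at a time, this rate is a single CIM entry $q^{(i)}_{x_i y_i \mid \pa(\X_i)}$ when $y_A$ differs from $x_A$ in exactly one $A$-coordinate $i$, and it is zero when $y_A$ differs in two or more coordinates; in either case it is a function of $x_A$ alone. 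This is exactly the strong lumpability criterion, so $\X_A = \pi(\X)$ is itself a CTMP. Reading off these aggregate rates shows that its intensity matrix is precisely the amalgamation of the collection $\{Q_{\X_i \mid \pa(\X_i)}\}_{i \in A}$, i.e.\ the CTMP attached to the CTBN with graph $\G_A$ (note that $\pa(\X_i)$ computed in $\G_A$ coincides with $\pa(\X_i)$ in $\G$, again because $A$ is ancestral). The initial distribution of $\X_A$ is the $A$-marginal of $P_0$.

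Rather than re-deriving lumpability entirely by hand, I would then appeal to the cited results: Theorem 2 of \cite{schweder1970composable} and Proposition 4 of \cite{didelez2007graphical} characterize, in the language of composable Markov processes and local independence, when a subprocess evolves autonomously. The ancestral condition $\an(A) = A$ translates into the absence of any edge into $A$ from $V \setminus A$, so that $A$ is locally independent of its complement, which is the hypothesis those theorems require in order to conclude that $\X_A$ is a composable (hence CTBN) process carrying the induced local structure.

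The main obstacle is the step asserting that marginalization preserves the Markov property, since projecting a Markov process onto a subset of coordinates is generally not Markov. Everything hinges on verifying that the ancestrality of $A$ yields uniformity of the transition rates over the complementary coordinates, which is what makes the projection lumpable. Additional care is needed to check that the ``one coordinate changes at a time'' feature of CTBNs is inherited by the projection, so that $\X_A$ again admits a CTBN factorization rather than being merely an unstructured CTMP, and to confirm that the parent sets and CIMs transfer unchanged to $\G_A$.
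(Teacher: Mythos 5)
Your proposal is correct, and it in fact supplies more than the paper itself does: the paper states this proposition without any written proof, deriving it entirely from Proposition 4 of \cite{didelez2007graphical} and Theorem 2 of \cite{schweder1970composable} --- which is precisely the appeal you make in your third paragraph. The substantive difference is your self-contained argument: amalgamate the CTBN into a CTMP on $S$, observe that ancestrality forces $\pa(\X_i)\subseteq A$ for every $i\in A$, verify the strong-lumpability (Dynkin) criterion for the projection $x\mapsto x_A$ --- the total rate out of $x$ into any class $\{z: z_A=y_A\}$ is a single CIM entry when $y_A$ differs from $x_A$ in one coordinate and zero otherwise, hence a function of $x_A$ alone --- and then read off the lumped generator as the amalgamation of $\{Q_{\X_i\mid\pa(\X_i)}\}_{i\in A}$ with graph $\G_A$ and unchanged parent sets. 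This computation is sound, and it is the more elementary route: it makes explicit exactly where ancestrality enters (uniformity of the $A$-rates over the complementary coordinates is what rescues the Markov property, which coordinate projection does not preserve in general), and it checks directly that the one-transition-at-a-time structure and the CIMs carry over, rather than inheriting these from the cited machinery. What the paper's citation-based route buys is brevity and placement of the result within the composable-Markov-process framework of \cite{schweder1970composable} and \cite{didelez2007graphical}, where local independence of $A$ from $V\setminus A$ (no edges into $A$ from outside, which is equivalent to ancestrality here) immediately yields that the marginal is again a composable process with the induced local characteristics; your lumpability verification is essentially a direct proof of that general fact specialized to the present setting, so the two approaches agree on substance while yours is checkable without consulting the references.
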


\begin{proof}[Proof of Proposition \ref{prop:indepPart}]
  Note that $A$, $B$, and $C$ must be disjoint. We consider a connecting path
  between $A$ and $B$  in $(\G_{\an(A\cup B\cup C)})^m$ which does not intersect
  $C$,
  \begin{align*}
    \X_{i_0} - \X_{i_1} - \ldots - \X_{i_m}.
  \end{align*}
  Let $f: \X \rightarrow D$ be the unique map such that if $f(\X_i) = D_l$, then
  $\X_i\in D_l$. We consider the walk
  \begin{equation*}
    f(\X_{i_0}) - f(\X_{i_1}) - \ldots - f(\X_{i_m}).
  \end{equation*}
  and argue that this walk, or a subwalk, is present in
  $(\mathcal{D}_{\an(\underline{A}\cup\underline{B}\cup\underline{C})})^m$ and
  is not intersected by $\underline{C}$. Every node on the original walk is in
  $\an(A\cup B\cup C)$ in $\G$, so every node on the above walk is in
  $\an(\underline{A}\cup\underline{B}\cup\underline{C})$ in $\mathcal{D}$. We
  remove nodes such that no adjacent nodes are equal (note that the result is a
  nontrivial walk). If an edge on the original walk corresponds to a directed
  edge in $\G$, then it is also in $\mathcal{D}$. Assume it does not correspond
  to a directed edge on the original walk. It then corresponds to a `moral'
  edge, $\X_{i_j} \rightarrow \X_k \leftarrow \X_{i_{j+1}}$ in $\G$, and these
  must be in different $D_i$. In this case, $\X_{i_j} - \X_{i_{j+1}}$ is also in
  $(\mathcal{D}_{\an(\underline{A}\cup\underline{B}\cup\underline{C})})^m$. No
  node can be in $\underline{C}$ on this walk. We can reduce this to a path such
  that no node is repeated. Note that the end nodes are in $\underline{A}$ and
  $\underline{B}$, respectively.
\end{proof}

\begin{prop}
  Let ${D}_1, \ldots, {D}_m$ be the strongly connect components of $\G$. If
  there are no edges between ${D}_i$ and ${D}_j$, $i\neq j$, in $\G$, then
  $\overline{\X}(t)_{D_i} \indep \overline{\X}(t)_{D_j} \mid
  \overline{\X}(t)_{p_i}$ or $\overline{\X}(t)_{D_i} \indep
  \overline{\X}(t)_{D_j} \mid \overline{\X}(t)_{p_j}$ where $p_i = \bigcup_{D_k
  \in \pa(D_i)} D_k$ and $p_j = \bigcup_{D_k \in \pa(D_j)} D_k$.
\end{prop}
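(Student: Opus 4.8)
The plan is to reduce the statement to the two separation results already proved, Proposition~\ref{prop:indepPart} and Proposition~\ref{prop:indep}, applied to the condensation $\mathcal{D}$ of $\G$, which is acyclic by construction. Set $\underline{A}=\{D_i\}$ and $\underline{B}=\{D_j\}$. The hypothesis that there are no edges between $D_i$ and $D_j$ in $\G$ means exactly that $D_i$ and $D_j$ are non-adjacent in $\mathcal{D}$. The source of the disjunction in the conclusion is the acyclicity of $\mathcal{D}$: two distinct nodes of a directed acyclic graph cannot each be a proper ancestor of the other (composing the two directed paths would yield a cycle), so at least one of the following holds—$D_j$ is a non-descendant of $D_i$, or $D_i$ is a non-descendant of $D_j$. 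By symmetry I would treat the first case and take $\underline{C}=\pa(D_i)$, the parents of $D_i$ in $\mathcal{D}$, so that the associated original node set is $C=\bigcup_{D_k\in\pa(D_i)}D_k=p_i$; the second case is identical after swapping $i$ and $j$ and produces the conditioning set $p_j$.

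Next I would verify directly that $\underline{A}$ and $\underline{B}$ are separated by $\underline{C}$ in $(\mathcal{D}_{\an(\underline{A}\cup\underline{B}\cup\underline{C})})^m$; this is simply the moralization form of the ordered Markov property for directed acyclic graphs. Since $\pa(D_i)\subseteq\an(\{D_i\})$, the relevant ancestral set equals $\an(\{D_i,D_j\})$. In the induced subgraph $D_i$ has no children: a child of $D_i$ is a descendant of $D_i$, and it can lie in $\an(\{D_i,D_j\})$ only by being an ancestor of $D_j$, which would make $D_j$ a descendant of $D_i$, contradicting the case assumption. Consequently, upon moralization $D_i$ acquires no co-parent edges, and its only neighbors are its parents $\pa(D_i)=\underline{C}$. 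As $D_j\neq D_i$ and $D_j\notin\underline{C}$ (by non-adjacency), every path from $D_i$ to $D_j$ must leave $D_i$ through $\underline{C}$, so $\underline{A}$ and $\underline{B}$ are separated by $\underline{C}$.

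I would then invoke Proposition~\ref{prop:indepPart}, after checking that $\underline{A},\underline{B},\underline{C}$ are disjoint—which holds since $D_i\neq D_j$ and $D_i,D_j\notin\pa(D_i)$—to conclude that the original node sets $A=D_i$, $B=D_j$, and $C=p_i$ satisfy: $A$ and $B$ are separated by $C$ in $(\G_{\an(A\cup B\cup C)})^m$. These three sets are again pairwise disjoint, being unions of distinct strongly connected components with $D_i,D_j\notin\pa(D_i)$, so Proposition~\ref{prop:indep} applies and yields $\overline{\X}(t)_{D_i}\indep\overline{\X}(t)_{D_j}\mid\overline{\X}(t)_{p_i}$. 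The symmetric case delivers the alternative with $p_j$, and together these give the stated disjunction.

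The main obstacle is the middle step, that is, pinning down that $D_i$ has no descendants inside the ancestral set $\an(\{D_i,D_j\})$ and therefore picks up no new neighbors under moralization; everything hinges on this fact, which is precisely where acyclicity of the condensation and the case distinction ``non-descendant of'' are used. The remaining two steps are routine applications of the already-established propositions together with the bookkeeping needed to confirm disjointness of the relevant sets.
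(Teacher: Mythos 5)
Your proposal is correct and is essentially the paper's own proof: pass to the acyclic condensation, use acyclicity to pick (WLOG) the component with no descendants toward the other, separate it from the other by its parent set in the moralized ancestral subgraph, and then chain Propositions~\ref{prop:indepPart} and~\ref{prop:indep}. The only difference is cosmetic—you spell out the moralization/no-children argument that the paper merely asserts, and you swap the roles of $D_i$ and $D_j$, which is immaterial given the disjunction in the statement.
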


\begin{proof}
  If there are no edges between any node in ${D}_i$ and any node in ${D}_j$,
  then ${D}_i$ and ${D}_j$ are not adjacent in the condensation of $\G$. The
  condensation is acyclic, so we can without loss of generality assume that
  ${D}_i$ is not a descendant of ${D}_j$. There are no descendants of ${D}_j$ in
  $\an(\{{D}_i , {D}_j\} \cup \pa({D}_j))$ and this means that ${D}_i$ and
  ${D}_j$ are separated by $\pa({D}_j)$ in $(\mathcal{D}_{\an(\{{D}_i,{D}_j\}
  \cup \pa({D}_j))})^m$ where $\mathcal{D}$ is the condensation of $\G$. The
  result follows from Propositions \ref{prop:indepPart} and \ref{prop:indep}.
\end{proof}

\section{Cascade Identification}
\label{subsection:cascade_identification}

Informally, a cascade of events is a fast sequence of transitions; where fast is
relative to the rest of the transitions that are observable during the evolution
of the process. Starting from this informal definition, we can develop a naive
approach to identifying such cascades in a trajectory. First of all we need to
identify two quantities: - $\lambda_{ft}$: the \textit{fast threshold}
determines when two consecutive transitions are considered to occur
\textit{fast}. -  $\lambda_{mcl}$: the \textit{minimum cascade length}
determines the minimum number of fast consecutive events to be considered a
cascade.

Given the two parameters  the identification procedure consists of iterating
over the entire trajectory and identifying subsets of consecutive transitions
with length at least $\lambda_{mcl}$ and with a transition time between each
pair of consecutive events of less than $\lambda_{ft}$. This approach can also
be used to identify a \textit{sentry state}. Indeed, once a cascade of events
has been identified, the sentry state is the state from which the cascade
begins.

The main limitation of this approach is the difficulty of identifying the
correct parameters as it requires knowing in advance common durations and sizes
of event cascades.

In addition, we define two simple quantities: \textit{Naive Count} - the number
of times a state starts a cascade, and \textit{Naive Score} - the fraction of
times that observing a specific state coincides with the start of a cascade.

\section{Synthetic Experiments}
\label{app:synth}

\begin{table}[H]
\centering
\begin{tabular}{cccc|c|c|cc|c|c|c|cc|}
\cline{1-3} \cline{5-8} \cline{10-13}
\multicolumn{1}{|c|}{A} & 0    & \multicolumn{1}{c|}{1}    &  & A                  & B & 0    & 1     &  & B                  & C & 0     & 1     \\ \cline{1-3} \cline{5-8} \cline{10-13}
\multicolumn{1}{|c|}{0} & -1.0 & \multicolumn{1}{c|}{1.0}  &  & \multirow{2}{*}{0} & 0 & -0.1 & 0.1   &  & \multirow{2}{*}{0} & 0 & -0.1  & 0.1   \\
\multicolumn{1}{|c|}{1} & 5.0  & \multicolumn{1}{c|}{-5.0} &  &                    & 1 & 15.0 & -15.0 &  &                    & 1 & 15.0  & -15.0 \\ \cline{1-3} \cline{5-8} \cline{10-13}
                        &      &                           &  & \multirow{2}{*}{1} & 0 & 15.0 & -15.0 &  & \multirow{2}{*}{1} & 0 & -15.0 & 15.0  \\
                        &      &                           &  &                    & 1 & 0.1  & -0.1  &  &                    & 1 & 0.1   & -0.1  \\ \cline{5-8} \cline{10-13}
\end{tabular}
  \caption{Conditional Intensity Matrices used for the example in Figure
    \ref{fig:3_nodes_experiment}. Process A has no parents and therefore its
    transition rate only depends on its own state: If A is in state 0
    (\emph{off}), then its transition rate (to state 1 (\emph{on})) is $1.0$.
    Process B has a single parent, process A. The states of processes A and B
    determine the transition rate of process B. If A is in state 0 (\emph{off})
    and B is in state 0 (\emph{off}), then B transitions to state 1 (\emph{on})
    with rate $0.1$. A CTBN is defined from its CIMs and its initial
    distribution. Its graph illustrates the dependence structure in the CIMs.}
\label{table:3_nodes_cims}
\end{table}

\begin{table}[H]
\centering
\begin{tabular}{|ccc|cccc|}
\hline
A & B & C & EDNT  & REDNT & Naive Score & Naive Count\\ \hline
\textbf{1} & \textbf{0} & \textbf{0} & \textbf{6.316} & \textbf{1.589} & \textbf{0.35}  & \textbf{304}\\
1 & 0 & 1 & 6.444 & 1.359 & 0.21 & 13\\
\textbf{0} & \textbf{1} & \textbf{0} & \textbf{5.394} & \textbf{1.357} & \textbf{0.16} & \textbf{41}\\
\textbf{0} & \textbf{0} & \textbf{1} & \textbf{4.740} & \textbf{1.192} & \textbf{0.03} & \textbf{26}\\
0 & 1 & 1 & 5.511 & 1.163 & 0.22 & 153\\
1 & 1 & 0 & 6.173 & 1.145 & 0.08  & 57\\
1 & 1 & 1 & 5.455 & 1.0   & 0.03 & 19\\
\textbf{0} & \textbf{0} & \textbf{0} & \textbf{3.976} & \textbf{1.0}   & \textbf{0.02} & \textbf{24}\\ \hline
\end{tabular}
  \caption{Values of EDNT, REDNT, Naive Score, and Naive Count for the CTBN
    depicted in Figure \ref{fig:3_nodes_experiment}. Higher values of REDNT
    indicate CTBN states that are more likely to be sentry states. One should
    note that high-scoring states with few alarms (bold rows) are more
    interesting in our application as they correspond to states that occur
    before strong cascading behavior.}
\label{table:3_nodes_results}
\end{table}

\begin{table}[H]
\centering
\begin{tabular}{|cccccc|cccc|}
\hline
A       & B       & C       & D       & E       & F       & EDNT    & REDNT   & Naive Score & Naive Count\\ \hline
0 & 0 & 1 & 0 & 0 & 0 & 12.98 & 1.46 & 0.24 & 2172 \\
0 & 0 & 0 & 1 & 0 & 0 & 11.33 & 1.28 & 0.25 & 2156\\
0 & 1 & 0 & 0 & 0 & 0 & 10.76 & 1.21 & 0.04 & 848\\
0 & 0 & 0 & 0 & 1 & 0 & 10.75 & 1.21 & 0.14 & 1533\\
1 & 0 & 0 & 0 & 0 & 0 & 10.24 & 1.15 & 0.02 & 341\\
0 & 0 & 0 & 0 & 0 & 1 & 9.90 & 1.12 &  0.03 & 651\\
0 & 0 & 0 & 0 & 0 & 0 & 8.87 & 1.0 & 0.01 & 426\\\hline
\end{tabular}
  \caption{Values of EDNT, REDNT, Naive Score, and Naive Count  of the states
    with at most one active alarm for the CTBN depicted in Figure
    \ref{fig:6_nodes_experiment}.}
\label{table:6_nodes_results}
\end{table}

\begin{figure}[H]
  \centering
  \includegraphics[width=0.8\textwidth]{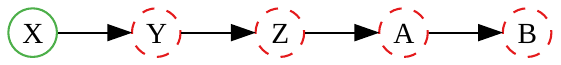}
  \caption{Graph $\G$ of a CTBN model consisting of a chain of five processes.}
\label{Fig:5_nodes_chain}
\end{figure}

\begin{figure}[H]
  \centering
  \includegraphics[width=0.8\textwidth]{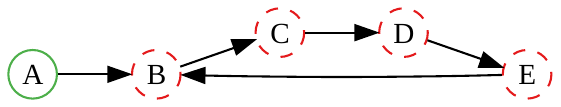}
  \caption{Graph $G$ of a CTBN model consisting of five processes, including a
    cycle.}
\label{Fig:5_nodes_cycle}
\end{figure}

\begin{figure}[H]
  \centering
  \includegraphics[width=0.5\textwidth]{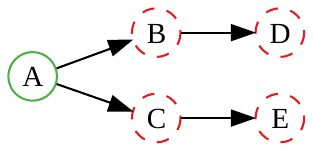}
  \caption{Graph $\G$ of a CTBN model consisting of five processes. The graph
    $\G$ contains a bifurcation after the root node $A$.}
\label{Fig:5_nodes_fork}
\end{figure}

\begin{figure}[H]
  \centering
  \includegraphics[width=0.5\textwidth]{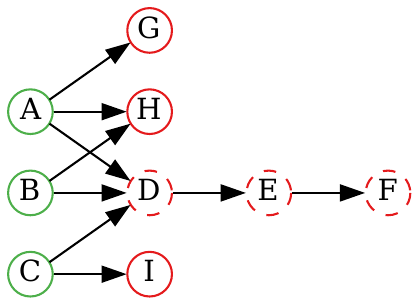}
  \caption{Graph $\G$ of a CTBN model consisting of nine processes and with a
    more complex structure. The sentry state has three active alarms.}
\label{Fig:9_nodes}
\end{figure}

\end{document}